\let\ANDAuthor\AND
\let\AND\relax
\newtheorem*{lemma31*}{Proposition 3.1} 
\theoremstyle{plain}
\newtheorem{theorem}{Theorem}[section]
\newtheorem{proposition}[theorem]{Proposition}
\newtheorem{lemma}[theorem]{Lemma}
\newtheorem{corollary}[theorem]{Corollary}
\theoremstyle{definition}
\theoremstyle{remark}
\newcommand{\given}{\,|\,}
\newcommand{\V}[1]{\boldsymbol{\mathbf{#1}}}
\DeclareMathOperator*{\argmax}{arg\,max}
\DeclareMathOperator*{\argmin}{arg\,min}
\title{Bayesian Ensembling: Insights from Online Optimization and Empirical Bayes}
\author{\name Daniel Waxman \email dan@basis.ai \\
      \addr
      Stony Brook University\thanks{Work done at Stony Brook University. Current address: Basis Research Institute \texttt{<dan@basis.ai>}.}
      \ANDAuthor
      \name Fernando Llorente \email fllorente@bnl.gov \\
      \addr 
      Brookhaven National Laboratory
      \ANDAuthor
      \name Petar M. Djuri\'c \email petar.djuric@stonybrook.edu \\ 
      \addr 
      Stony Brook University}
\begin{document}

\maketitle

\begin{abstract}
We revisit the classical problem of Bayesian ensembles and address the challenge of learning optimal combinations of Bayesian models in an online learning setting.  To this end, we reinterpret existing approaches such as Bayesian model averaging (BMA) and Bayesian stacking through a novel empirical Bayes lens, shedding new light on the limitations and pathologies of BMA. Further motivated by insights from online optimization, we propose Online Bayesian Stacking (OBS), a method that optimizes the log-score over predictive distributions to adaptively combine Bayesian models. A key contribution of our work is establishing a novel connection between OBS and portfolio selection, bridging Bayesian ensemble learning with a rich, well-studied theoretical framework that offers efficient algorithms and extensive regret analysis. We further clarify the relationship between OBS and online BMA, showing that they optimize related but distinct cost functions. Through theoretical analysis and empirical evaluation, we identify scenarios where OBS outperforms online BMA and provide principled methods and guidance on when practitioners should prefer one approach over the other.
\end{abstract}

\section{Introduction}
Combining the opinions of multiple models is a pervasive problem in the statistical sciences, with many different names, approaches, and applications. In signal processing, for example, a commonly encountered problem is one of \emph{sensor fusion}, where information reported from several different sensors must be combined to obtain the best possible estimate \citep{khaleghi2013multisensor}. In econometrics, the problem is often known as \emph{forecast combination}, spurred by the seminal work of \citet{bates1969combination}. The Bayesian literature often refers to the problem as \emph{opinion pooling}, whichreceived attention even in the early days of Bayesian statistics \citep{stone1961opinion,degroot1974reaching,genest1986combining}. While many of the above approaches originated in the combination of point estimates, the combination of probability distributions has received significant attention since \citep{koliander2022fusion}. 

In machine learning, this issue is commonly known as \emph{ensembling}, with much recent interest due to the development of diverse models, architectures, and training modalities \citep{dietterich2000ensemble}. Many ensembling approaches exist, hinging on different assumptions and asymptotic guarantees. We will restrict our attention to Bayesian machine learning, where the goal is to combine the estimates of $K$ different {\em probabilistic} models $\mathcal{M}_1, \dots, \mathcal{M}_K$. The classical approach in this setting, known as \emph{Bayesian model averaging (BMA)}, is to weigh the estimates of each model according to their marginal likelihood \citep{hoeting1999bayesian}. When the data were generated by one of the models $\mathcal{M}_1, \dots, \mathcal{M}_K$, this is the ``correct'' way to combine models from a Bayesian perspective and is optimal, in the sense of choosing the correct model with probability $1$ in the limit of infinite data \citep{yao2018using}.

More recently, \citet{yao2018using} critically examined BMA in a more typical setting where the data are not generated by any of the candidate models. They instead proposed \emph{Bayesian stacking}, which optimizes a log-score criterion; asymptotically, Bayesian stacking corresponds to choosing the optimal \emph{convex combination} of models, juxtaposed with the optimal singular model selected by BMA.
Similar discussions and proposals have previously appeared in the econometrics community for forecasting models \citep{hall2007combining,geweke2012prediction}.

In this paper, we further study the online stacking problem, introducing a novel analysis via empirical Bayes before moving to a new analysis of Bayesian stacking in the online setting. We summarize our contributions as follows:
\begin{enumerate} %
    \item We show that \emph{Online Bayesian Stacking (OBS)} corresponds exactly to the well-studied problem of (universal) online portfolio selection (OPS).  This connection allows us to reinterpret Bayesian ensemble learning through the lens of online convex optimization and to leverage efficient, theoretically grounded algorithms (e.g., Exponentiated Gradient and the Online Newton Step). %
    \item We show theoretically how one can choose between OBS and online BMA with constant regret.
    Building on this connection, we discuss how regret bounds from the OPS literature can be applied to OBS. 
    \item We introduce a simple yet compelling argument via empirical Bayes to explain why BMA collapses and how Bayesian stacking aids in avoiding this common problem. 
    \item We perform an extensive empirical analysis of OBS using ensembles of state-of-the-art models, including Gaussian processes, variational Bayesian neural networks, and stochastic volatility forecasting models estimated with sequential Monte Carlo. We show that, in all cases tested, OBS significantly outperforms online BMA (O-BMA) and dynamic model averaging (DMA), with additional computational cost that is often negligible in comparison to online training/prediction of ML models. We also show that OBS can be particularly beneficial in non-stationary environments, often outperforming any set of fixed weights. 
\end{enumerate}

The rest of this article is structured as follows: in \cref{sec:lit_review}, we review the problem of Bayesian ensembling, discussing the BMA and stacking approaches. In \cref{sec:oco}, we show how the online variant of Bayesian stacking corresponds to the well-known (universal, online) portfolio selection problem from online convex optimization, and discuss corresponding insights that show OBS's relation to O-BMA. In \cref{sec:empirical_bayes}, we exploit this connection to derive a novel comparison of BMA and Bayesian stacking from an empirical Bayes perspective. 
In \cref{sec:experiments}, we carry out a number of realistic experiments on synthesized and real data before finally concluding in \cref{sec:discussion}.

\section{Bayesian Ensembles: A Review of Existing Methods} \label{sec:lit_review}
In this section, we review the ideas of BMA and Bayesian stacking \citet{yao2018using} and discuss some related work.

\subsection{Bayesian Model Averaging}

Bayesian ensembling methods combine a set of Bayesian models for predictive inference. We will focus on methods that create linear mixtures of posterior predictive distributions: given models $\mathcal{M}_1, \dots, \mathcal{M}_K$ each mapping $\V{x} \in \mathcal{X}\subseteq \mathbb{R}^d$ to a probability (density) over $y\in\mathcal{Y}\subseteq \mathbb{R}^r$, trained on a dataset $\mathcal{D}$, the task of Bayesian ensembling is thus to find a weight vector $\mathbf{w}$ in the $(K-1)$-dimensional simplex $\mathbb{S}^K$, which induces a posterior predictive distribution
\begin{equation} \label{eq:ensemble_def}
    p^{\text{ens}}(y \given \mathbf{x}, \mathcal{D}) \coloneqq \sum_k w_k p_k(y \given \mathbf{x}, \mathcal{D}),
\end{equation}
where $p_k(y \given \mathbf{x}, \mathcal{D})$ is the posterior predictive distribution of model ${\cal M}_k$ on the dataset $\mathcal{D}$.

The most common method for combining Bayesian models is BMA \citep{hoeting1999bayesian}, which forms the weight for a linear mixture of predictive distributions by simulating a posterior probability for model $\mathcal{M}_k$ given data $\mathcal{D}$ for weighting, 
\begin{equation} \label{eq:BMA_weights}
    w_k^{\text{BMA}} \coloneq \Pr(\mathcal{M}_k \given \mathcal{D}) = \frac{p_k(\mathcal{D}) \Pr(\mathcal{M}_k)}{\sum_k p_k(\mathcal{D}) \Pr(\mathcal{M}_k)},
\end{equation}
where $p_k(\mathcal{D}) \coloneq p(\mathcal{D}|\mathcal{M}_k)$ is the evidence of ${\cal M}_k$, and $\Pr(\mathcal{M}_k)$ is the prior probability of ${\cal M}_k$. 

The appeal of BMA to a Bayesian is immediate: if $\Pr(\mathcal{M})$ has support over the data-generating distributions, then the resulting mixture \cref{eq:ensemble_def} is a straightforward application of Bayes' rule and is thus optimal in an information-theoretic sense \citep{zellner1988optimal}. 

Much of our work focuses on the \emph{online} setting, where BMA has several additional advantages. In this setting, we first obtain a new data point $\mathbf{x}_{t+1}$, with which we must make a prediction using the available data $\mathcal{D}_{t}$. After making a prediction, the value $y_{t+1}$ is revealed and the dataset $\mathcal{D}_{t+1} \coloneq \mathcal{D} \cup \{\mathbf{x}_{t+1}, y_{t+1}\}$ is updated. In this case, we can compute $w_k^{\text{BMA}}$ recursively using posterior predictive distributions, which allows us to apply \emph{exact} BMA at time $t$ using the currently-available information: denoting the weights at time $t$ as $w_{t, k}$, with $w_{0, k} = \Pr(\mathcal{M}_k)$, \eqref{eq:BMA_weights} becomes
\begin{equation} \label{eq:BMA_weights_online}
    w_{t+1,k}^{\text{BMA}} = \frac{ w_{t, k} p_k(y_{t+1} \given \V{x}_{t+1}, \mathcal{D}_t)}{\sum_k w_{t, k} p_k(y_{t+1} \given \V{x}_{t+1}, \mathcal{D}_t)}.
\end{equation}

Two main problems arise when using BMA: first, estimation via \cref{eq:BMA_weights} or \cref{eq:BMA_weights_online} requires access to predictive distributions, either through the marginal likelihood (i.e., the prior predictive) or the posterior predictive. This is often a surmountable problem, as marginal likelihoods are usually available in conjugate models, and BMA with approximated marginal likelihoods has also performed well empirically \citep{gomez2020bayesian}.

Perhaps more importantly, BMA is only optimal in terms of predictive error in the so-called ``M-closed'' setting, where the data were generated by one of the models $\mathcal{M}_1, \dots, \mathcal{M}_K$ \citep{yao2018using,minka2000bayesian}. Indeed, in the 
limit of infinite data, BMA weights concentrate on the single model that most closely resembles the data-generating process. We provide a novel empirical Bayes-based analysis of this fact in \cref{sec:empirical_bayes}.
This may result in arbitrarily poor posterior predictive accuracy of an ensemble using $\mathbf{w}^{\text{BMA}}$ relative to some other set of weights $\mathbf{w}^{*}$.
Additionally, O-BMA can numerically collapse to the ``wrong'' model, never recovering due to numerical underflow \citep{waxman2024dynamic}. 

\subsection{Bayesian Stacking}

Bayesian stacking \citep{yao2018using} (similarly explored by \citep{clyde2013bayesian,le2017bayes}) presents an alternative way to derive a weight vector $\mathbf{w}$. In particular, Bayesian stacking finds the optimal weight vector $\V{w}^*$ in the $(K-1)$-dimensional simplex $\mathbb{S}^K$ by maximizing some score $\mathcal{S}(\V{w}, \mathcal{D}')$ over a separate dataset $\mathcal{D}'$:
\begin{equation}
    \V{w}^* \coloneq \argmax_{\V{w} \in \mathbb{S}^K} \mathcal{S}(\V{w}, \mathcal{D}').
\end{equation}
Particular attention is given to the log-score; for the predictive dataset (i.e., a holdout or validation set) $\mathcal{D}' = \{(\V{x}_n, y_n)\}_{n=1}^N$\footnote{We will typically use $n=1,\dots,N$ to denote datasets that are not processed sequentially/online, and $t=1, \dots, T$ for their sequential/online counterparts.}, the corresponding optimization problem is
\begin{equation} \label{eq:bayesian_stacking_optimization_problem}
    \V{w}^* \coloneq \argmax_{\V{w} \in \mathbb{S}^K} \sum_{n=1}^N \log \sum_{k=1}^K w_k p_k(y_n \given \V{x}_n, \mathcal{D}),
\end{equation}
where $p_k(y_n \given \V{x}_n, \mathcal{D})$ is the predictive distribution of model $\mathcal{M}_k$. This can be recognized as finding the mixture of estimators with the highest predictive likelihood over $\mathcal{D}'$. This, in turn, minimizes the 
KL
divergence between the mixture $q(y \given \V{x}) = \sum_k w_k p_k(y \given \V{x}, \mathcal{D})$ and the predictive distribution, $p(y \given \V{x})$, which represents the (unknown) true generating mechanism of 
$y$ \citep{yao2018using}.

The optimization problem \eqref{eq:bayesian_stacking_optimization_problem} is convex and provides nice asymptotic guarantees, but ``wastes'' data by requiring two separate datasets $\mathcal{D}$ and $\mathcal{D}'$. \citet{yao2018using} address this by showing the score in \eqref{eq:bayesian_stacking_optimization_problem} is well-approximated by the leave-one-out (LOO) predictive density $p(y_n \given \V{x}_n, \mathcal{D}_{-n})$, where $\mathcal{D}_{-n} = \mathcal{D} \setminus \{(\V{x}_n, y_n)\}$. For many models, the LOO predictive density can be efficiently estimated with Pareto-smoothed importance sampling \citep{vehtari2024psis}, resulting in %
an efficient optimization problem with a %
single dataset $\mathcal{D}$.
The resulting mixture of estimators has empirically shown performance superior to that of other combination methods, leading to several recent applications.

\subsection{Bayesian Stacking for Time Series \& Online Bayesian Stacking}

In the published discussion to \citet{yao2018using}, \citet{ferreira2018discussion} discusses Bayesian stacking for time series, where forecasting densities naturally serve as predictive distributions, i.e., \eqref{eq:bayesian_stacking_optimization_problem} becomes
\begin{equation} \label{eq:bayesian_stacking_optimization_problem_timeseries}
    \V{w}^* \coloneq \argmax_{\V{w} \in \mathbb{S}^K} \sum_{t=t^*+1}^{T} \log \sum_{k=1}^K w_k p_k(y_{t} \given \V{x}_{t}, \mathcal{D}_{1:t-1}),
\end{equation}
where the first $t^*$ data are devoted to (pre-)training each model, and $\mathcal{D}_{1:t-1}$ denotes the first $t-1$ data points.
This has notable similarities to pooling methods described by \citet{hall2007combining,geweke2011optimal}, with the exception of the summation starting with index $t=1$, reminiscent of the LOO approach. This method was applied by \citet{geweke2012prediction}, where optimal weights were computed quarterly. 
``Windowed'' approaches for autocorrelated time series data, where only the last $T_p$ points are considered, have also been deployed \citep{jore2010combining,aastveit2014nowcasting}, but these are suboptimal in the more general case where data may be exchangeable conditioned on $\V{x}_t$. Related to windowed approaches is dynamic model averaging, which is O-BMA with forgetting factors \citep{raftery2010online}.
To the best of our knowledge, Bayesian stacking and its time series variants have not been applied in the \emph{online} setting, where weights are estimated as new data become available. We will refer to approaches to the online problem as \emph{online Bayesian stacking (OBS)}.

As we will see, OBS is a special case of online convex optimization (OCO) \citep{hazan2022introduction}, where ``learning from experts'' is well-studied. We show that the popular Hedge algorithm \citep{freund1997decision} generalizes BMA with a learning rate, but optimizes a different loss function from OBS. Our proposed OBS is differentiated from existing approaches in several ways. First, it directly emulates Bayesian stacking, of recent interest to the Bayesian community. Second, our connection to OCO yields an extremely efficient implementation, unlike methods based on data-driven portfolio selection (e.g., \citet{bacsturk2019forecast}) that rely on particle filtering. Finally, our approach is more general for online learning, whereas data-driven strategies rely on the autocorrelation of time series data.

\section{Insights from Online Convex Optimization} \label{sec:oco}
As Bayesian stacking is natively framed as an optimization problem, it is natural to study OBS from the optimization perspective as well. It turns out that our studies here are fruitful: by interpreting posterior predictive values as asset prices, OBS corresponds exactly to a classical problem of portfolio selection. An OCO perspective additionally rediscovers a development of \citet{vovk2001competitive}, which shows that BMA is exactly the Hedge algorithm with a specific choice of learning rate. We additionally discuss a hybrid approach meant to detect whether the model class is M-open or M-closed (or, more precisely, if OBS mixtures will outperform expert selection).

\subsection{The Portfolio Selection Problem}

 We will first review the basic ideas of online portfolio selection (OPS), but our discussion will necessarily be brief; the interested reader is referred to the recent survey of \citet{li2014online}, or the excellent lecture notes of \citet{hazan2022introduction,orabona2019modern}. 

 \textbf{Problem Statement} Consider an investment manager overseeing $K$ stocks, aiming to maximize total future wealth. They create the \emph{best constantly rebalanced portfolio (BCRP)}, i.e., at each time step they re-allocate current wealth such that $w_k$ is in stock $k$ (classic portfolio selection assumes no transaction fees). Assets are allocated based on \emph{price-relative vectors} $\V{r}_t$, where $r_{t, k}$ is stock $k$'s relative value increase from day $t-1$ to $t$. The goal is to maximize the (multiplicative) wealth at time $T$, or more conveniently, the (additive) log-wealth:
\begin{equation} \label{eq:log_wealth}
    \text{wealth}_T \coloneq \prod_{t=1}^{T} \V{w}^\top \V{r}_t \leftrightsquigarrow  \log \text{wealth}_T \coloneq \sum_{t=1}^{T}\log  (\V{w}^\top \V{r}_t).
\end{equation}

In the \emph{online} version of the portfolio selection problem, we seek to find $\V{w}_t$ that minimizes the regret with respect to the optimal weights $\V{w}^*$, 
\begin{equation} \label{eq:portfolio_regret}
    R_T \coloneq \sum_{t=1}^{T}\log  ({\V{w}^{*}}^\top \V{r}_t) - \sum_{t=1}^{T}\log  (\V{w}_t^\top \V{r}_t),
\end{equation}
where the weights $\V{w}_t$ are determined after observing $\V{r}_{t-1}$. 

Although many approaches explicitly model returns as stochastic processes, others allow $r_{t, k}$ to be chosen arbitrarily, and indeed by an adversary. Algorithms that achieve sublinear regret with respect to any non-negative sequence of returns $r_{t, k}$ are known as \emph{universal} \citep[p. 144]{orabona2019modern}. We draw an analogy between the price-relative vectors $\V{r}_{t}$ and the predictive likelihoods $p(y_t \given \V{x}_t, \mathcal{D}_{t-1})$,  which make the adversarial case of particular interest to us. 

\textbf{Algorithms} The portfolio selection problem has an attractive structure from the optimization perspective: the log wealth is a concave function defined over the simplex, a convex set. OPS therefore falls under the (online) convex optimization (OCO) umbrella \citep{hazan2022introduction,orabona2019modern}. To keep with standard terminology, we will introduce each method as minimizing the convex loss function given by $-\log \text{wealth}_T$, which is equivalent to maximizing the concave function $\log \text{wealth}_T$.

The simplest OCO algorithm is online (sub)gradient descent (OGD), a straightforward extension of the classical gradient descent algorithm to a sequence of losses $\ell_1, \dots, \ell_T$, each a function of some generic quantity $\V\theta$ belonging to a convex set $\mathcal{K}$. In our case, $\V{\theta}_t$ represents weights $\V{w}_t$, $\ell_t$ is the negative log-return, $-\log(\V{w}_t^\top \V{r}_t)$, and $\mathcal{K}$ is the simplex $\mathbb{S}^K$.
At each time instant, $\V{\theta}_t$ is updated in the direction of the gradient $\nabla \ell_t$ and then projected back onto the convex set $\mathcal{K}$ (e.g., \citep[Algorithm 2.3]{hazan2022introduction}). 

In many problems, it turns out to be beneficial to consider regularizers that account for the geometry of $\mathcal{K}$ explicitly, resulting in different update steps. This is an online form of the classical mirror descent \citep{nemirovsky1983wiley}.
When $\mathcal{K}$ is the simplex $\mathbb{S}^{K}$ (as in OPS), one popular choice is entropic regularization (e.g., \citep[Section 5.4.2]{hazan2022introduction}). The resulting algorithm is known as \emph{Exponentiated Gradients (EG)} \citep{helmbold1998line},
which uses the update step
\begin{equation} \label{eq:exponentiated_gradients}
    \V{w}_{t+1} = \frac{\V{w}_t \odot \exp(-\eta \nabla_{\V{w}} \ell_t)}{\lVert \V{w}_t \odot \exp(-\eta \nabla_{\V{w}} \ell_t) \rVert_1},
\end{equation}
where $\odot$ is the elementwise product, and $\eta$ is a learning rate. For the cost function $\ell_t(\V{w}) = -\log \V{w}_t^\top \V{r}_t$, we have $\nabla_{\V{w}} \ell_t = -\V{r}_t / \V{w}_t^\top \V{r}_t$, and therefore, 
\begin{equation} \label{eq:exponentiated_gradients_portfolio}
    \V{w}_{t+1} = \frac{\V{w}_t \odot \exp(\eta \V{r}_t / \V{w}_t^\top \V{r}_t)}{\lVert \V{w}_t \odot \exp(\eta \V{r}_t / \V{w}_t^\top \V{r}_t) \rVert_1}.
\end{equation}
The EG algorithm is conceptually simple, with relatively good regret bounds whenever the gradient is bounded. However, more modern algorithms, such as the \emph{online Newton step (ONS)} \citep{hazan2007logarithmic}, can provide tighter bounds on the regret. 

Of note for applications in non-stationary environments are OCO algorithms designed for such environments. In our experiments, for example, we will utilize \emph{discounted ONS (D-ONS)} \citep{yuan2020trading}, an ONS variant that includes a forgetting factor over the second-order information.

\paragraph{Market Variability and Soft-Bayes} In order to achieve optimal regret bounds with EG or ONS, we must assume that the ratio $\alpha$ of the minimum return to maximum return at any time $t$ --- called the \emph{market variability parameter} --- is bounded. In this case, $\ell_t$ is $1$-exp-concave with bounded gradients, meaning that $\exp(-\ell_t) = \V{w}_t^\top \V{r}_t$ is a concave function \citep{hazan2022introduction}. So long as this assumption holds, the ONS algorithm provides tighter regret bounds than EG. 

Moving beyond the assumption of a bounded $\alpha$ is the more recent Soft-Bayes \citep{orseau2017soft}. Soft-Bayes proposes weight updates using a learning rate ${\eta}\in(0, 1)$ as 
\begin{equation} \label{eq:softbayes}
    \mathbf{w}_{t+1} = \mathbf{w}_t \odot \left( 1 - \eta + \eta \frac{\mathbf{r}_t}{\mathbf{w}_t^\top \mathbf{r}_t}\right).
\end{equation}
\citet{orseau2017soft} provide an interpretation of Soft-Bayes in terms of ``slowing down'' O-BMA, and the resulting algorithm provides state-of-the-art regret bounds without the assumption of bounded gradients. We remark that while Soft-Bayes is in some aspects similar to our work, their focus is on developing algorithms and theoretical bounds for log-loss mixtures of experts. In the current work, we focus on statistical insights, connecting this online log-loss problem to the recently popular Bayesian stacking, and showing empirically that OBS is viable for modern Bayesian machine learning.

\subsection{Online Bayesian Stacking is Portfolio Selection}
Our core insight in this section is that the utility of \eqref{eq:bayesian_stacking_optimization_problem_timeseries} becomes \eqref{eq:log_wealth} when the market gain $r_{t, k}$ is defined by the predictive density $p_k(y_{t} \given \V{x}_t, \mathcal{D}_{t-1})$. Indeed, our only requirement in a universal portfolio algorithm is that $r_{t, k}$ is nonzero, which is a rather mild and near-universally satisfied assumption for the predictive distribution of a regression model. Furthermore, using a constant rebalanced portfolio in the regret \eqref{eq:portfolio_regret} is appropriate, as a constant rebalanced portfolio corresponds to the constant weights used in (offline) Bayesian stacking.

A point of nuance in applying OPS algorithms is the existence of a market variability parameter $\alpha$, requiring pre-determined maximum and minimum predictive densities. The maximum is clear for proper Bayesian models, and a minimum may be assumed (e.g., via compact data spaces or bounded model error). However, $\alpha$ may still be very small, which can be problematic for EG/ONS regret analysis. Bounding regret using subgradient norms can produce tighter guarantees, since non-adversarial predictions from failing models (i.e., $w_{t, k} \approx 0$) are unlikely to suddenly dominate. If such outliers are a concern, Soft-Bayes also provides $\alpha$-independent regret bounds.

\begin{algorithm}[t]
\caption{Online Ensemble Update: \textbf{\textcolor[HTML]{cd5c5c}{OBS with EG}} or \textbf{\textcolor[HTML]{83B692}{OBS with Soft-Bayes}} or \textbf{\textcolor[HTML]{4e92f2}{O-BMA}}.} \label{alg:obs_vs_obma}
\begin{algorithmic}[1]
\STATE \textbf{Input:} Data stream $\{(\mathbf{x}_t,y_t)\}_{t=1}^T$, models $\{M_k\}_{k=1}^K$, initial weights $\mathbf{w}_0\in\Delta^K$, learning rate $\eta$.
\FOR{$t=1$ \TO $T$}
    \item[] \textbf{\textit{\textcolor{gray}{// Prediction Step}}}
    \STATE Receive $\mathbf{x}_t$ and compute each model's predictive density $p_k(y \mid \mathbf{x}_t, \mathcal{D}_{t-1})$ for $k = 1, \dots, K$ 
    \STATE Form ensemble prediction via \cref{eq:ensemble_def}
    \STATE Output prediction and receive true label $y_t$.
    \item[] \textbf{\textit{\textcolor{gray}{// Update Step}}}
    \STATE Update model weights using \textbf{\textcolor[HTML]{cd5c5c}{OBS w/ EG \eqref{eq:exponentiated_gradients_portfolio}}} or \textbf{\textcolor[HTML]{83B692}{OBS w/ Soft-Bayes \eqref{eq:softbayes}}} or \textbf{ \textcolor[HTML]{4e92f2}{O-BMA \eqref{eq:BMA_weights_online}}} 
    \STATE Update dataset:
      $\mathcal{D}_t \gets \mathcal{D}_{t-1}\cup\{(\mathbf{x}_t,y_t)\}.$ 
\ENDFOR
\end{algorithmic}
\end{algorithm}

We provide pseudocode for OBS and O-BMA in \cref{alg:obs_vs_obma}. The pseudocode underscores the similarity between OBS and O-BMA, replacing a single update equation. Other algorithms for OPS may be used by changing Line 6.

\subsection{Online Bayesian Model Averaging is the Hedge Algorithm with Learning Rate $1$}
Now that we have established an equivalence between OBS and the portfolio selection problem, one may wonder if a similar connection holds for O-BMA. Indeed, it is classically known that O-BMA updates are a specific choice of the \emph{Hedge algorithm} \citep{freund1997decision}, which performs expert selection. While this connection is established \citep{vovk2001competitive}, we rederive it as a useful narrative tool to motivate O-BMA as a model selection algorithm, which we show clearly using \cref{eq:bma_lemma}.

Close inspection reveals that $w_{t+1, k}^{\text{BMA}}$ is precisely recovered by EG when the learning rate is $\eta = 1$ and the loss is
\begin{equation}
    \ell_t(\V{w}_t) = -\sum_k w_{t, k} \log p_k(y_t \given \V{x}_t, \mathcal{D}_{t-1}),
\end{equation}
whose gradient is  $\nabla_{w_{t, k}}\ell_t(\V{w_t}) = -\log p_k(y_t|\V{x}_t,\mathcal{D}_{t-1})$.
This suggests that O-BMA may act similarly to an OCO algorithm that minimizes regret with respect to a different loss function,
\begin{equation} \label{eq:bma_loss}
    \mathcal{L} = -\sum_{t=1}^T \sum_{k=1}^K w_k \log p_k(y_t \given \V{x}_t, \mathcal{D}_{t-1}).
\end{equation}
This reveals a key insight: OBS aims for the best post-hoc expert \emph{mixture}, while O-BMA targets the best \emph{single} expert.
The corresponding bound achieves constant regret; in fact, in the typical proof of this regret upper bound, we prove an equality regarding the regret. The following result is therefore known, but is presented in a somewhat unorthodox way to allow us to show \emph{lower} bounds on the regret.

\begin{proposition} \label{eq:bma_lemma}
    Let the regret of the BMA mixture with respect to the best individual model be defined as
    $
        R_T = \sum_t \log p_{k^{*}}(y_{t} \given \V{x}_t, \mathcal{D}_{t-1}) -\sum_t \log\left(\sum_k w_{t, k} p_k(y_t \given \V{x}_t, \mathcal{D}_{t-1}) \right),$
    where $k_*$ is the model with the largest marginal likelihood. Then $R_T$ is related to an evidence gain in $M_{k^{*}}$,
    \begin{equation}
        R_T = \log \Pr(\mathcal{M}_{k^*} \given \mathcal{D}_T) / \Pr(\mathcal{M}_{k^*}).
    \end{equation}
\end{proposition}
Thus, if $\Pr(\mathcal{M}_{k^*} \given \mathcal{D})$ is bounded below, so is the regret. In particular, \cref{eq:bma_lemma} applied the following theorem directly, which asserts O-BMA acts as an ``optimizer'' to \cref{eq:bma_loss}.

\begin{corollary}
    Let the regret of the BMA mixture with respect to the best individual model be defined as in \cref{eq:bma_lemma}. If the posterior probability of the optimal model $\mathcal{M}_{k^*}$ exceeds its prior probability, i.e., $\Pr(\mathcal{M}_{k^*} \given \mathcal{D}) \geq \Pr(\mathcal{M}_{k^*})$, then $R_T$ is bounded both above and below, 
    \begin{equation} \label{eq:bma_regret_bound}
        0 \leq R_T \leq -\log \Pr(\mathcal{M}_{k^{*}}).
    \end{equation}
\end{corollary}

Thus, under typical scenarios (e.g., uniform priors on $\mathcal{M}_k$), \cref{eq:bma_regret_bound} becomes fairly tight, and the solutions become strong online optimizers of \cref{eq:bma_regret_bound}.

\subsection{Regret Analysis}

Connecting OBS with OPS makes regret bounds from the OCO literature available. Different choices of algorithms and assumptions on the predictive densities $\V{p}_t$ affect the obtainable regret analysis (c.f. \citet[Table 1]{van2020open}); however, without additional assumptions on $\V{p}_t$, the most efficient algorithms typically obtain regret of order $\mathcal{O}(\sqrt{T})$ (such as Soft-Bayes \citep{orseau2017soft}). If a bound on the norm of the gradients may be assumed, ONS can provide regret on the order $\mathcal{O}(\log T)$ (with runtime scaling with $K^2$). Variants of simple OCO algorithms may even provide regret with respect to time-varying optimal parameters, such as the D-ONS algorithm \citep{yuan2020trading}. 

We further discuss regret bounds in \cref{app:regret_analysis}. In \cref{app:gp_regret_example}, we provide an example that shows that, with mild additional assumptions, we can even recover regret bounds from O-BMA applications, albeit with potentially worse constants - this is despite the fact that O-BMA cannot provide comparable bounds when regret is measured against the mixture loss.

\subsection{A Hybrid Approach}

In the case where the M-closed scenario is plausible, but this fact is unknown, OBS still achieves vanishing regret with respect to the singular best expert: this follows because the problem of determining an optimal mixture of densities is harder than the corresponding expert selection problem, which is included as a special case. 

Nevertheless, as determined in \cref{eq:bma_lemma}, the corresponding regret guarantees for O-BMA in the M-closed setting are extremely strong, achieving constant regret as an upper bound. When a model extremely close to the data generating process is available, OBS is thus potentially suboptimal.

One potential approach to ameliorate this is to consider a hybrid method, where O-BMA and OBS mixtures are maintained and further averaged via a secondary layer of O-BMA. That is, we consider two sets of weights, $\V{w}_t^{\text{O-BMA}}$ and $\V{w}_t^{\text{OBS}}$, along with a secondary set of weights $\V{v}_t \in \mathbb{S}^2$. The corresponding predictive distribution is thus
\begin{equation} \label{eq:hybrid}
    p^\text{hybrid}(y_t \given \V{x}_t, \mathcal{D}_{1:t-1}) = v_1 \underbrace{\left(\sum_k w_{t, k}^{\text{O-BMA}} p_k(y_t \given \V{x}_t, \mathcal{D}_{t-1})\right)}_{p^{\text{O-BMA}}(y_t \given \V{x}_t, \mathcal{D}_{t-1})} + v_2 \underbrace{\left(\sum_k w_{t, k}^{\text{OBS}} p_k(y_t \given \V{x}_t, \mathcal{D}_{t-1}) \right)}_{p^{\text{OBS}(y_t \given \V{x}_t, \mathcal{D}_{t-1})}}.
\end{equation}
The predictive performance of this mixture achieves constant regret with respect to the best model, once again, while maintaining the expressiveness of the corresponding OBS solution. In particular, we may obtain the following bound:
\begin{theorem}
    Let the regret of the hybrid mixture in \cref{eq:hybrid} for the best individual model be defined as in \cref{eq:bma_lemma}, i.e., $R_T = \sum_t \log p_{k^*}(y_t \given \V{x}_t, \mathcal{D}_{t-1}) - \sum_t \log p^{\text{hybrid}}(y_t \given \V{x}_t, \mathcal{D}_{t-1})$, where $k^*$ is the model with the largest marginal likelihood. Further assume uniform prior weights over $\V{v}$ and $\V{w}^{\text{O-BMA}}$. Then $R_T$ may be bounded as
    \begin{equation} 
        R_T \leq \log K + \log 2.
    \end{equation}
\end{theorem}
\begin{proof}
    From the ``outermost'' perspective of ensembling (i.e., with $\V{v}$), we have by \cref{eq:bma_lemma} a regret bound with respect to any ``outer'' expert (i.e., the O-BMA or OBS mixtures):
    \begin{equation} \label{eq:hybrid_eq_1}
        \sum_t \log p^{\text{O-BMA}}(y_t \given \V{x}_t, \mathcal{D}_{t-1}) - \sum_t \log p^{\text{hybrid}}(y_t \given \V{x}_t, \mathcal{D}_{t-1}) \leq \log 2.
    \end{equation}
    Further, from \cref{eq:bma_lemma}, O-BMA achieves a bounded regret
    \begin{equation} \label{eq:hybrid_eq_2}
        \sum_t \log p_{k^*}(y_t \given \V{x}_t, \mathcal{D}_{t-1}) - \sum_t \log p^{\text{O-BMA}}(y_t \given \V{x}_t, \mathcal{D}_{t-1}) \leq \log K.
    \end{equation}
    Combining \cref{eq:hybrid_eq_1} and \cref{eq:hybrid_eq_2} completes the theorem.
\end{proof}

We note by a symmetric argument that if the OBS mixture has a clearly stronger predictive likelihood, the ``outer'' level of O-BMA will select the OBS mixture as the best expert, similarly incurring a simple additive regret factor.

\section{An Empirical Bayes Perspective} \label{sec:empirical_bayes}

An empirical Bayes (EB) perspective \citep{robbins1964empirical} elucidates BMA and Bayesian stacking properties, offering direct arguments for their limitations. Recall that in EB, hyperparameters $\V{\psi}$ are found by maximizing the marginal likelihood, rather than being marginalized out. We show that O-BMA and OBS have subtly distinct EB justifications. While we focus notationally on the online setting, similar results hold for batch processing.

Let $p(y_1|\V{\psi})$ be the prior predictive density of $y_1$ (model parameters are integrated out), and $p(y_i|\V{y}_{1:i-1},\V{\psi})$ for $i=2,\dots,t$ are posterior predictive densities.
Then the \emph{prequential principle} \citep{dawid1984present} studies a model through a predictive decomposition, 
choosing $\V{\psi}$ via $ p(\V{y}_{1:t}|\V{\psi}) = \prod_{\tau}  p(y_{\tau}|\V{y}_{1:{\tau}-1},\V{\psi}).$ Further selecting $\V\psi$ through the optimization problem
\begin{align} \label{eq:eb_estimator} %
    \V{\psi}_\star = \arg\max_{\V{\psi}} \sum_{i=1}^t\log p(y_i|\V{y}_{1:i-1},\V{\psi}) 
\end{align}
is termed \emph{empirical Bayes} or type-II maximum likelihood estimate.

\subsection{BMA is Empirical Bayes over an Indicator Variable}

It is well-known in the literature that BMA collapses to the model with the highest marginal likelihood when the amount of data increases \citep{yao2018using}, but explanations of this fact are not often presented. Using our insights that O-BMA is the Hedge algorithm, i.e., that it minimizes regret with the loss function \eqref{eq:bma_loss}, this fact is straightforward to see: $w_k$ should be $1$ for the model with the highest marginal likelihood and $0$ for the others.
In this sense, we may interpret BMA as performing empirical Bayes over an indicator variable $\V{z}$ in the corresponding mixture distribution. To see this, we note that \eqref{eq:bma_loss} can be written as
\begin{align}\label{eq:aqui}
    \mathcal{L} 
     &=
     \sum_{k=1}^K w_k \log \prod_{t=1}^T p_k(y_t \given \V{x}_t, \mathcal{D}_{t-1}) = \sum_{k=1}^K w_k\log  p_k( \mathcal{D}_{T}),
\end{align}
and that 
    $\sum_{k=1}^K w_k\log  p_k( \mathcal{D}_{T})\leq \max_k \log  p_k(\mathcal{D}_{T}).$
Hence, the optimal weight vector corresponds to $w_k = 0$ for all $k$ except for the model $k^*=\argmax_k p_k(\mathcal{D}_T)$ with the highest marginal likelihood, for which $w_{k^*}=1$. 
Note also that by Jensen's inequality, \eqref{eq:aqui} is a lower bound to $\log(\sum_kw_kp_k(\mathcal{D}_T))$. If we set $w_k=\text{Pr}(\mathcal{M}_k)$, this is the log of the evidence of the BMA model, so we can interpret BMA as performing empirical Bayes on the discrete prior probabilities of the models.

\subsection{OBS is Empirical Bayes Estimation over Mixture Weights}

On the other hand, OBS can be seen as performing empirical Bayes on the mixture weights themselves. To see this, we interpret $\V\psi$ as $\V{w}$, and the posterior predictive density
\begin{align}\label{eq_likelihood_stacking}
    p(y_i|\V{y}_{1:i-1},\V{w}) = \sum_{k}w_k p_k(y_i|\V{y}_{1:i-1}),
\end{align}
is the weighted mixture of the predictive densities of the individual models. The objective function that is being maximized in stacking in Eq. \eqref{eq:bayesian_stacking_optimization_problem_timeseries}
is
\begin{align}
    \sum_{i=1}^t\log\left(\sum_{k}w_k p_k(y_i|\V{y}_{1:i-1})\right) &= \sum_{i=1}^t\log p(y_i|\V{y}_{1:i-1},\V{w}) = \log p_\text{stack}(\mathcal{D}_t|\V{w}).
\end{align}
Hence, in stacking, we obtain the weights by maximizing the log-evidence of the `stacking' model, $\log p_\text{stack}(\mathcal{D}_t|\V{w})$, where the weights correspond to hyperparameters. Note the difference between this marginal likelihood and the one in BMA, where the marginal likelihood is itself a mixture and each component is independent of $\V{w}$.

\section{Experiments} \label{sec:experiments}
Thus far, we have primarily focused on the theoretical properties of OBS. In this section, we provide empirical evidence that OBS can be beneficial. We consider four main scenarios: an illustrative toy example (\cref{sec:exp_subset}), variational Bayesian neural networks (\cref{sec:exp_bnn}), SMC-based stochastic volatility models (\cref{sec:exp_forecasting}), and dynamic Gaussian processes in non-stationary environments (\cref{sec:online_nonstationary}).
We provide details on our experimental setup, baselines, and code in \cref{sec:experimental_setup_and_code}. 

\begin{figure}[t]
     \centering
     \begin{subfigure}[b]{0.3\textwidth}
         \centering
         \includegraphics[width=\textwidth]{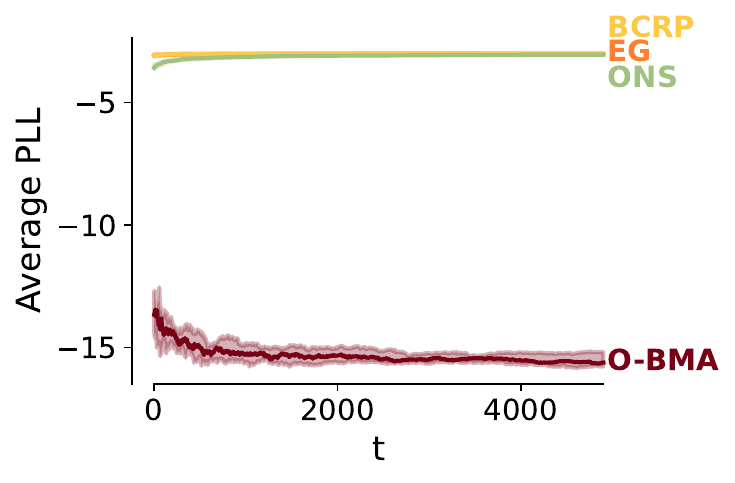}
         \caption{Toy Example: Open (\cref{sec:exp_subset})}
         \label{sf:open}
     \end{subfigure}
     \begin{subfigure}[b]{0.3\textwidth}
         \centering
         \includegraphics[width=\textwidth]{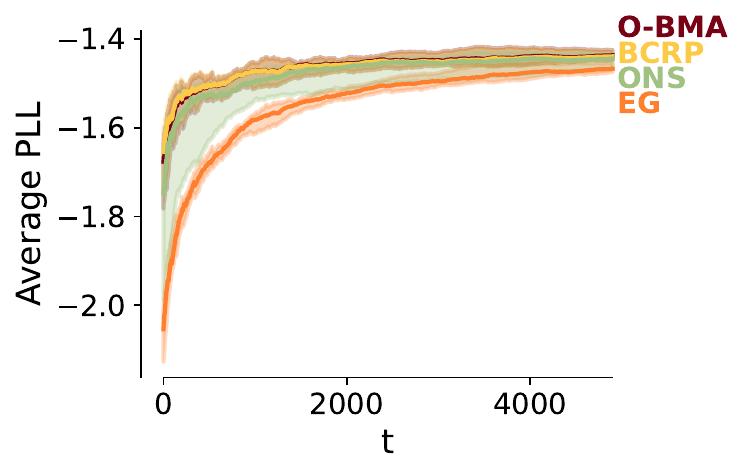}
         \caption{Toy Example: Closed (\cref{sec:exp_subset})}
         \label{sf:closed}
     \end{subfigure}
        \caption{The average predictive log-likelihood (higher is better) in the toy example.
        `EG'' is exponentiated gradients, ``ONS'' is the Online Newton Step, ``BCRP'' is the optimal constant rebalanced portfolio (offline baseline), and ``O-BMA'' is O-BMA. 
        Lines denote the median and shaded area represent the 10th to 90th percentiles over 10 trials. The first 100 samples are suppressed for readability.}
        \label{fig:exp_results}
\end{figure}

\subsection{Subset Linear Regression} \label{sec:exp_subset}

We revisit a classical problem from \citet{breiman1996stacked}, also used by \citet{yao2018using}, generating i.i.d. data from a 15-dimensional Gaussian linear model with weak predictors, as in \citet[Section 4.2]{yao2018using}. We consider two scenarios with an ensemble of 15 regression models. In the \textbf{``Open'' setting}, the ensemble consists of 15 univariate models ($y \approx \theta_k x_k$), none of which match the true process, where performance depends on combining models. In the \textbf{``Closed'' setting}, model $k$ uses features $x_1, \dots, x_k$, making model 15 the true model. In both scenarios, hyperparameters were pre-trained using empirical Bayes on 1000 points, followed by online deployment of OBS or O-BMA for 5000 points. Further experimental details are in \cref{app:more_linear_regressors}. As shown in \cref{sf:open,sf:closed}, OBS variants (EG, ONS) dramatically outperform O-BMA in the ``open'' setting. In the ``closed'' setting, OBS performs nearly as well as the theoretically optimal O-BMA, with ONS slightly outperforming EG.

As detailed in the theoretical section, one reason we may expect O-BMA to perform worse than OBS is the weight collapse of BMA. We can empirically validate this property in our toy experiment, with the expectation that O-BMA collapses to a single weight, and OBS retains a proper ``mix'' of models. 

In \cref{sf:weights_final_open}, we show the final weights in the ``open'' subset linear regression experiment for O-BMA, OBS, and the BCRP (i.e., the optimal pooling of \citet{geweke2011optimal}, or Bayesian stacking from the prequential principle \citep{yao2018using}). In \cref{sf:weights_evo_open}, we show the evolution of the BMA and OBS weights as new data arrive. Finally, in \cref{sf:weights_final_closed} and \cref{sf:weights_evo_closed}, we show the analogous final weights and the evolution of weights for the ``closed'' subset linear regression experiment.

These results show promising evidence of our approach: OBS converges to a set of weights very similar to the best retrospective weights within {5000} samples in both the ``open'' and ``closed'' variations. 
OBS with ONS seems to exhibit more of the ``collapsing'' than OBS with EG, but these differences could be due to the hyperparameter choices, which were set to the values, $\eta = 10^{-2}$ for EG, and $\delta = 0.8$, $\eta=\beta=10^{-2}$ for ONS.
We further observe the collapse of O-BMA, and that OBS can also ``collapse'' if appropriate.

\begin{figure*}
    \centering
    \begin{subfigure}[b]{0.5\textwidth}
         \centering
         \includegraphics[width=\textwidth]{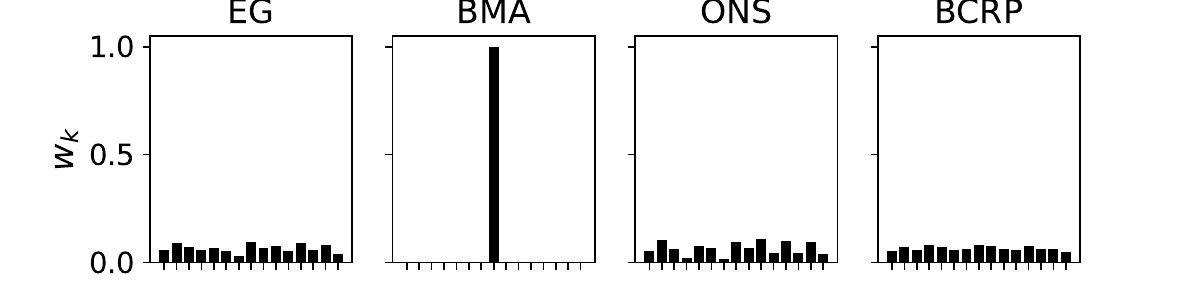}
         \caption{Toy Example: Open}
         \label{sf:weights_final_open}
     \end{subfigure}
     \begin{subfigure}[b]{0.45\textwidth}
         \centering
         \includegraphics[width=\textwidth]{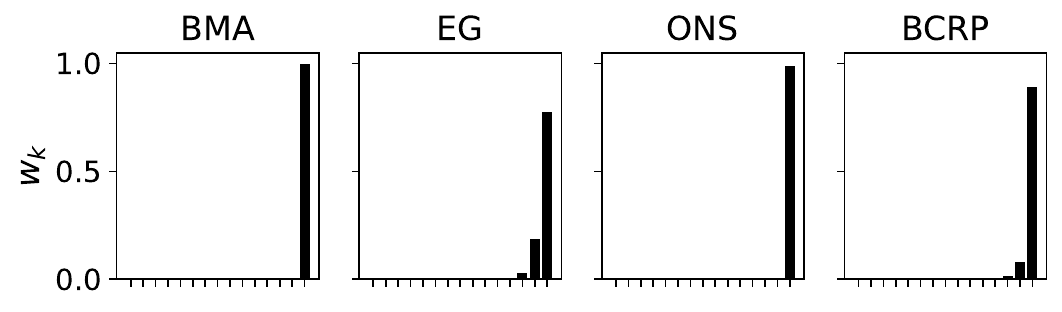}
         \caption{Toy Example: Closed}
         \label{sf:weights_final_closed}
     \end{subfigure}
    \caption{The final weights in the ``open'' and ``closed'' subset linear regression experiment.}
    \label{fig:toy_example_final_weights}
\end{figure*}

\begin{figure*}
    \centering
    \begin{subfigure}[b]{0.45\textwidth}
         \centering
         \includegraphics[width=\textwidth]{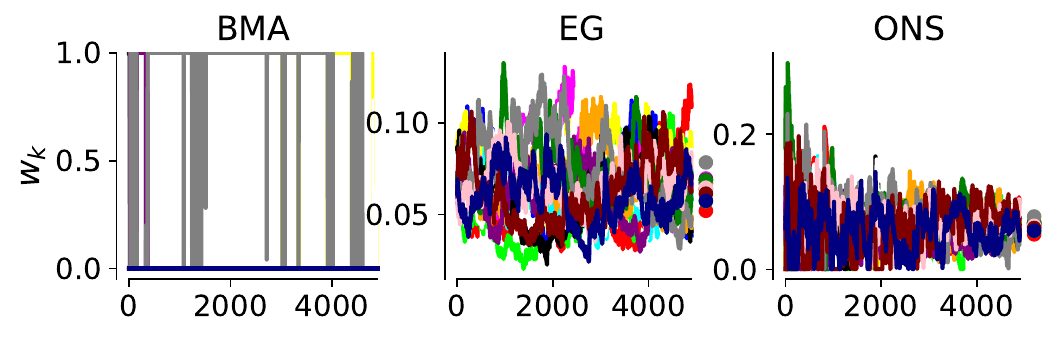}
         \caption{Toy Example: Open}
         \label{sf:weights_evo_open}
     \end{subfigure}
     \begin{subfigure}[b]{0.45\textwidth}
         \centering
         \includegraphics[width=\textwidth]{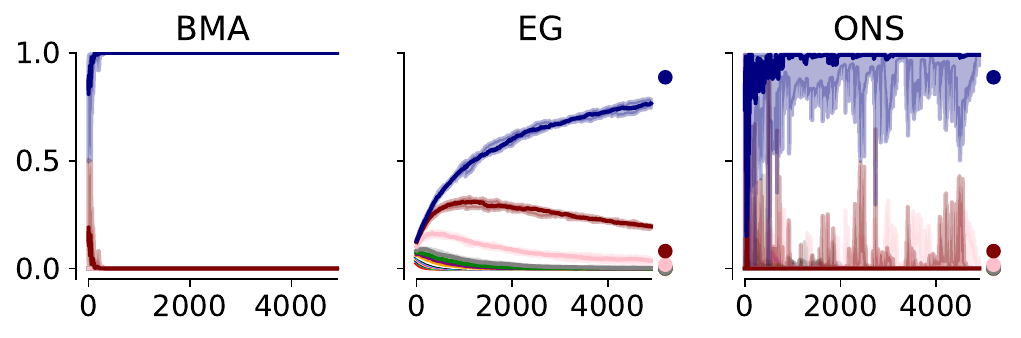}
         \caption{Toy Example: Closed}
         \label{sf:weights_evo_closed}
     \end{subfigure}
    \caption{The evolution of the weight vector $\V{w}_t$ as a function of $t$ in the ``open'' and ``closed'' subset linear regression experiment. Results are shown for a single trial due to the noisy nature of the plots. Dots on the right side of a plot denote the final weights of the BCRP.}
    \label{fig:toy_example_weights_evo}
\end{figure*}

\subsection{Online Variational Inference} \label{sec:exp_bnn}
We now move to a more practical application in Bayesian machine learning, applying OBS to online Bayesian neural networks. 
We use the recently proposed Bayesian online natural gradient (BONG), which optimizes the expected log-likelihood with online mirror descent \citep{jones2024bayesian}. Many different variants of BONG and related approaches are tested, unifying them under a common framework; we make online predictions on MNIST \citep{lecun2010mnist} using the best variant tested, ``\texttt{bong-dlr10-ef\_lin}''. To create ensembles, we train five Bayesian feedforward neural networks with different initializations. We provide more details on the experimental setup in \cref{app:more_mnist}.

We visualize the resulting predictive log-likelihoods in \cref{sf:BONG_NLL}. Although less dramatic than in the toy example, we observe clear improvements in predictive performance using OBS instead of O-BMA and DMA. Furthermore, the specific OCO algorithm used does not seem to matter much, with Soft-Bayes, EG, and ONS all resulting in solutions similar to the BCRP. In \cref{app:more_mnist}, we additionally visualize the evolution and final weights, which show that BMA collapses to a single model, whereas OBS converges to the optimal, more balanced weights.

\subsection{Online Forecasting} \label{sec:exp_forecasting}

We compare BMA and OBS variants for forecasting S\&P data, which were used in an offline analysis by \citet{geweke2011optimal,geweke2012prediction}, ensembling diverse GARCH models. Model parameters are estimated online using sequential Monte Carlo \citep{doucet2001introduction}, where the typical ``bank of filters'' approach is equivalent to BMA. Eight models (2 per class, with priors sampled from uniform hyperpriors) were generated -- see \cref{app:more_garch} for further details.

\begin{wraptable}{r}{0.3\textwidth}
  \centering
  \begin{tabular}{lc}
    \toprule
    Method & Median Reward \\
    \midrule
    EG         & 3.46 \\
    BMA        & 3.44 \\
    ONS        & 3.46 \\
    Soft-Bayes & 3.31 \\
    \bottomrule
  \end{tabular}
  \caption{Median reward (predictive log-likelihood; higher is better) at the final time step in the GARCH experiment with $100$ models.} \label{tab:hundred_models}
\end{wraptable}

We visualize the resulting predictive log-likelihoods in \cref{sf:forecasting_NLL}. Again, we observe clear improvements in the predictive performance of OBS over O-BMA, with similar results obtained by Soft-Bayes, EG, and ONS. As this is real-world financial data, we expect some amount of non-stationarity: indeed, we find that DMA improves upon BMA, but remains inferior to the OBS methods, and that D-ONS performs the best. We show that BMA collapses once more in this example in \cref{app:more_garch}, which can be seen in the evolution and final weights.

We additionally performed our online forecasting experiments with $25$ models from each class, for a total of $100$ models. Repeating across 3 random seeds, we obtain the results in \cref{tab:hundred_models}. We used an EG learning rate 10 times larger than that used in previous experiments, guided by the logarithmic increase in the optimal theoretical learning rate as the number of models increases. The results show that ONS and EG still outperform BMA, while Soft-Bayes performs worse. We posit that the issues with Soft-Bayes are due to the relatively short time horizon (on the order of \num{1000} time steps) relative to the number of models -- we are fundamentally trading off some fast adaptation for robustness when obtaining the gradient bound-free regret bounds of Soft-Bayes.

\begin{figure}[t]
     \centering
    \begin{subfigure}[b]{0.3\textwidth}
         \centering
         \includegraphics[width=\textwidth]{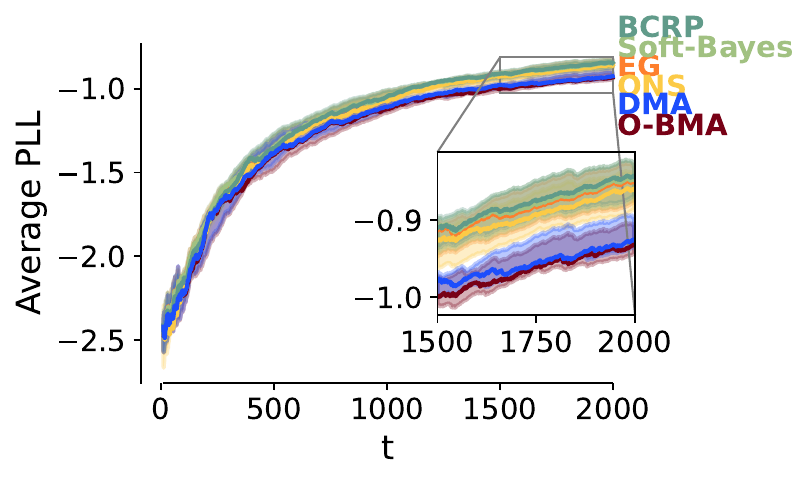}
         \caption{Bayesian Neural Network (\cref{sec:exp_bnn})}
         \label{sf:BONG_NLL}
     \end{subfigure}
     \begin{subfigure}[b]{0.3\textwidth}
         \centering
         \includegraphics[width=\textwidth]{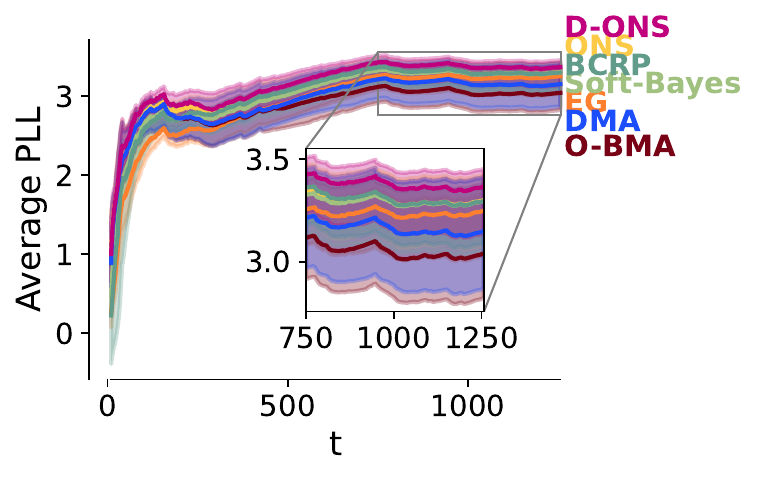}
         \caption{Forecasting with GARCH (\cref{sec:exp_forecasting})}
         \label{sf:forecasting_NLL}
     \end{subfigure}
        \caption{The average predictive log-likelihood (higher is better) in the MNIST and forecasting experiments, respectively. The method descriptions follow those in \cref{fig:exp_results}, with the addition of Soft-Bayes. Lines denote the median and shaded area represents  the 10th to 90th percentiles over 10 trials. The first 100 samples are suppressed for readability.}
        \label{fig:real_data}
\end{figure}

\subsection{Online Regression in Non-Stationary Environments} \label{sec:online_nonstationary}
A common issue in the online setting is non-stationarity, possibly due to covariate shift or concept drift. To illustrate the effectiveness of OBS in this setting, we apply OBS to the dynamic online ensemble of basis expansions (DOEBE) \citep{waxman2024dynamic}, which uses O-BMA to ensemble several online Gaussian process-based models. The DOEBE employs a linear basis approximation to GPs with random Fourier features \citep{lazaro2010sparse,rahimi2007random} and models non-stationary processes by imposing a random walk on these linear parameters, using variance $\sigma_{\text{rw}}^2$. This parameter is set to a default value in \citet{waxman2024dynamic}, but it is found to be quite important for performance on several real-world datasets.

We apply OBS instead of O-BMA to ensemble RFF-GPs with $\sigma_{\text{rw}} = 10^{-k}$, for $k \in \{ 0, 1, 2, 3, 4\}$. We use the same datasets as \citet{waxman2024dynamic} (excluding purely synthetic ones); a summary of datasets and experimental details is in \cref{app:more_doebe}. Of note are SARCOS and Kuka \#1, both of which are robotics datasets with covariate shift. For SARCOS and Kuka \#1 only, we use the smoothed version of EG \citep{helmbold1998line} with hyperparameters $\eta = 10^{-3}$ and $\delta = 10^{-2}$; otherwise, EG exhibits severe instabilities. 

We include numerical results in \cref{tab:nonstationary_results} and figures in \cref{app:more_doebe}. We again find OBS to be beneficial. Though EG might be sensitive to outliers and occasionally performs poorly (particularly on SARCOS and Kuka \#1),  ONS and Soft-Bayes are rather robust and consistently outperform O-BMA. DMA again outperforms O-BMA, and D-ONS performs best on almost every dataset. Interesting future work includes analyzing adaptive regret in non-stationary settings.

\begin{table}[t]
\centering
\caption{The average predictive log-likelihood (median with 10th and 90th percentiles) in the non-stationary experiments; \text{bolded} values are best, and \underline{underlined} values are second-best.}
\begin{tabular}{llccccc}
\toprule
Method Type & Method & Elevators & SARCOS & Kuka \#1 & CaData & CPU Small \\
\midrule
\multirow{2}{8em}{Online Baselines}
 & O-BMA & $-0.59_{\scriptstyle-0.60}^{\scriptstyle-0.58}$ & $0.77_{\scriptstyle0.74}^{\scriptstyle0.78}$ & $0.61_{\scriptstyle0.52}^{\scriptstyle0.62}$ & $-0.84_{\scriptstyle-0.85}^{\scriptstyle-0.83}$ & $0.33_{\scriptstyle0.32}^{\scriptstyle0.34}$ \\
 & DMA & $-0.59_{\scriptstyle-0.59}^{\scriptstyle-0.58}$ & $0.79_{\scriptstyle0.77}^{\scriptstyle0.80}$ & $0.69_{\scriptstyle0.63}^{\scriptstyle0.70}$ & $-0.81_{\scriptstyle-0.81}^{\scriptstyle-0.80}$ & $0.33_{\scriptstyle0.33}^{\scriptstyle0.34}$ \\
\midrule
 \multirow{1}{*}{Offline}
 & BCRP & $\textbf{-0.57}_{\scriptstyle-0.58}^{\scriptstyle-0.57}$ & $0.80_{\scriptstyle0.77}^{\scriptstyle0.81}$ & $0.66_{\scriptstyle0.62}^{\scriptstyle0.68}$ & $-0.81_{\scriptstyle-0.81}^{\scriptstyle-0.81}$ & $\textbf{0.35}_{\scriptstyle0.35}^{\scriptstyle0.37}$ \\
\midrule
\multirow{4}{6em}{Online Bayesian Stacking}
 & EG & $\underline{-0.58}_{\scriptstyle-0.58}^{\scriptstyle-0.57}$ & $0.78_{\scriptstyle0.59}^{\scriptstyle0.79}$ & $0.56_{\scriptstyle0.45}^{\scriptstyle0.67}$ & $-0.81_{\scriptstyle-0.81}^{\scriptstyle-0.81}$ & $0.35_{\scriptstyle0.30}^{\scriptstyle0.36}$ \\ 
 & Soft-Bayes & $-0.58_{\scriptstyle-0.58}^{\scriptstyle-0.58}$ & $0.80_{\scriptstyle0.77}^{\scriptstyle0.81}$ & $\underline{0.70}_{\scriptstyle0.67}^{\scriptstyle0.72}$ & $-0.81_{\scriptstyle-0.81}^{\scriptstyle-0.81}$ & $0.34_{\scriptstyle0.34}^{\scriptstyle0.36}$ \\ 
 & ONS & $-0.58_{\scriptstyle-0.59}^{\scriptstyle-0.58}$ & $\underline{0.80}_{\scriptstyle0.78}^{\scriptstyle0.82}$ & $0.70_{\scriptstyle0.66}^{\scriptstyle0.72}$ & $\underline{-0.80}_{\scriptstyle-0.81}^{\scriptstyle-0.80}$ & $0.34_{\scriptstyle0.34}^{\scriptstyle0.35}$ \\ 
 & D-ONS & $-0.58_{\scriptstyle-0.58}^{\scriptstyle-0.58}$ & $\textbf{0.81}_{\scriptstyle0.79}^{\scriptstyle0.82}$ & $\textbf{0.73}_{\scriptstyle0.69}^{\scriptstyle0.74}$ & $\textbf{-0.80}_{\scriptstyle-0.80}^{\scriptstyle-0.80}$ & $\underline{0.35}_{\scriptstyle0.35}^{\scriptstyle0.36}$ \\
\bottomrule
\end{tabular}
\label{tab:nonstationary_results}
\end{table}

\subsection{Additional Experiments}

We include several additional experiments in the appendices. Namely, in \cref{app:learning_rate}, we perform a sensitivity analysis on the learning rate in OCO algorithms. We come to the general conclusion that the learning rate does not significantly affect the results of our experiments for a set of \emph{a priori} sensible values.

Similarly, in \cref{tab:dma_with_other_lr}, we include ablation on the forgetting factor hyperparameter in DMA. We find that more aggressive forgetting can help in highly non-stationary environments, but that OBS methods (and, in particular, D-ONS) remain competitive, even without further hyperparameter tuning. 

\section{Discussion \& Conclusions} \label{sec:discussion}

In this work, we critically study the idea of ``Bayesian ensembles,'' especially in the online setting. 
We focused on two strategies for determining the weights in a linear ensemble: Bayesian model averaging (BMA) and Bayesian stacking (BS). In particular, we introduced and discussed OBS through the prequential principle. Through careful observation, we established a connection between the OBS problem and the OPS problem, allowing us to leverage the rich literature on online convex optimization. Below, we provide some practical guidance and limitations (\cref{sec:practical_guidance}), outlook (\cref{sec:outlook}), and brief conclusions in \cref{sec:conclusions}).

\subsection{Practical Guidance} \label{sec:practical_guidance}

\paragraph{When to Prefer OBS.} Our theory and experiments suggest that OBS is preferable in M-open scenarios, i.e., whenever the true data-generating model is not available, which we believe is typically the case in machine learning. When the problem is known to be M-closed, we reiterate previously-known theoretical results that O-BMA is optimal, though OBS, using many different online optimization algorithms, still provides competitive performance. Under anticipated and extreme non-stationarity, DMA with small forgetting factors is usually a viable approach, but otherwise, OBS using OCO algorithms designed for dynamic environments should be preferred.

\paragraph{Choosing Your OCO Algorithm.}
The theoretical and practical properties of OBS depend on the OCO algorithm used. Based on our experiments and analysis of the relevant theory, we provide the following recommendations:
\textbf{(1)} In scenarios with anticipated non-stationarities, D-ONS should be preferred.
\textbf{(2)} In more general scenarios where the anticipated non-stationarity is modest and gradual, guidance is somewhat more subjective. Based on our empirical experiments, we recommend ONS if the computation is not prohibitive, EG if the computation is extremely prohibitive and extreme non-stationarity is not a concern, and Soft-Bayes otherwise; however, we encourage practitioners to experiment in their particular domains.

\subsection{Outlook} \label{sec:outlook}
We view OBS as providing a flexible framework, allowing for exciting research along two different axes. 

First, the OBS connection provides an impactful application for new techniques developed in OCO. The connection also motivates a set of potentially interesting assumptions to incorporate into future regret bound analyses. For example, some recent work in OCO has focused on so-called ``data-dependent bounds,'' which can provide more optimistic regret bounds depending on the statistics of the incoming data \citep{tsai2023data,putta2025data}; understanding the statistics of predictive log-likelihoods in certain online Bayesian learning settings could thus be a natural place to develop new bounds. Online learning also serves as a natural place to apply research on switching regret \citep{pasteris2024online}.

Second, OBS is quite a general framework for ensembling Bayesian models. Though we considered a wide range of models and applications in this paper (including variational Bayesian neural networks, GPs via basis expansion approximations, and stochastic volatility models via sequential Monte Carlo), we anticipate many more. For example, our sequential Monte Carlo experiment shows the benefits of OBS over the standard ``bank of filters'' approach to ensembling particle filters, suggesting novel applications in filtering theory.

It is also interesting to consider connections to other probabilistic machine learning settings, including non-Bayesian online deep learning methods \citep{valkanas2025modl}, and applications of distributed or decentralized OCO \citep{yan2012distributed,mateos2014distributed} to decentralized Bayesian inference methods that rely on O-BMA \citep{llorente2025decentralized}.

\subsection{Conclusions} \label{sec:conclusions}
Furthermore, we illustrate how O-BMA optimizes a different loss than OBS leading to a novel empirical Bayesian analysis, and show how O-BMA-based regret bounds can often be adapted to OBS. Finally, we empirically validate our theoretical claims with both synthetic and real datasets,
showing that BMA collapses and that the proposed OBS algorithms deliver better performance. 

The connection established in this work between OBS and the well-studied problem of portfolio selection bridges optimization and statistics and suggests many interesting future directions. For example, in online or continual learning, one is often concerned with properties under regime changes; a corresponding future direction is to evaluate OBS with algorithms aimed at minimizing \emph{dynamic} or \emph{adaptive regret} \citep[Chapter 10]{hazan2022introduction}. Future statistical investigations include how to initialize ensembles well and whether the same rules of thumb apply as in BMA.

\section*{Acknowledgments} {This work was supported by the National Science Foundation under Award 2212506.}

\bibliography{main}
\bibliographystyle{tmlr}

\appendix\clearpage

\section{More Details on Regret Analysis} \label{app:regret_analysis}
One of the main benefits of connecting OBS with OPS is the numerous regret bounds available in the literature. Stating regret bounds for applications of OBS in general is not possible, as the available regret bounds depend on both the OCO algorithm and the class of models being ensembled. Below, we first state some general bounds reflecting the state of the art in OPS. We then provide an example of how these bounds may be applied, showing a regret analysis for OBS applied to an ensemble of approximate Gaussian processes.

\subsection{Regret Bounds from Portfolio Selection}

Different choices of algorithm and different assumptions on the predictive densities $\V{p}_t$ affect the obtainable regret analysis (c.f. \citet[Table 1]{van2020open}). However, what is generally possible in \emph{streaming} algorithms (i.e., those whose runtime is linear in $T$) without making further assumptions on the loss is regret proportional to $\mathcal{O}(\sqrt{T}).$

By making further assumptions on the loss function --- and in particular, assuming that a market variability parameter $\alpha$ exists --- we can obtain even sharper bounds, often for simple algorithms. For example, let $G_{p}$ be a bound on the $p$-norm of the gradients (which follows from the existence of a bounded market variability parameter $\alpha$). Then EG guarantees regret proportional to $\sqrt{T}$ with an appropriate choice of learning rate \citep[Theorem 4.1]{helmbold1998line}. %
ONS provides even better logarithmic regret \citep[Theorem 1]{agarwal2006algorithms}.%

In the absence of a market variability parameter $\alpha$, both EG and ONS may be modified with a ``smoothing'' analysis, which artificially bounds the gradients, with the resulting EG regret bounds being proportional to $T^{3/4}$ \citep[Theorem 4.2]{helmbold1998line}, and the resulting ONS regret bounds being proportional to $\sqrt{T \log T}$ \citep[Theorem 1]{agarwal2006algorithms}. Soft-Bayes can provide regret bounds independent of $G$ and without prior knowledge using time-varying learning rates, reducing dependence on $T$ to $\sqrt{T}$.

How to apply these bounds to achieve results for learning algorithms is an interesting question and depends on the learning algorithm and the type of bound to prove. Below, we provide a simple example that shows OBS may preserve the asymptotic properties of existing regret bounds, even when regret is measured with respect to a single expert rather than a mixture.

\subsection{Regret of Online BMA} \label{app:bma_lemma}
We now prove \cref{eq:bma_lemma}.

\begin{lemma31*}
    Let the regret of the BMA mixture with respect to the best individual model be defined as
    \begin{equation} \label{eq:bma_regret}
        R_T = \sum_t \log p_{k^{*}}(y_{t} \given \V{x}_t, \mathcal{D}_{t-1}) -\sum_t \log\left(\sum_k w_{t, k} p_k(y_t \given \V{x}_t, \mathcal{D}_{t-1}) \right),
    \end{equation}
    where $k_*$ is the best model. Then $R_T$ is related to an evidence gain in $M_{k^{*}}$,
    \begin{equation}
        R_T = \log \frac{\Pr(\mathcal{M}_{k^*} \given \mathcal{D}_T)}{\Pr(\mathcal{M}_{k^*})}.
    \end{equation}
\end{lemma31*}
\begin{proof}
    This follows quickly from the definition of the O-BMA weights \cref{eq:BMA_weights_online}. Denoting $\ell^{\text{(BMA)}}_t = -\sum_t \log\left(\sum_k w_{t, k} p_k(y_t \given \V{x}_t, \mathcal{D}_{t-1} \right)$ and $l_t^{(k^{*})} = -\log p_{k^{*}}(y_{t} \given \V{x}_t, \mathcal{D}_{t-1})$, note that
        \begin{equation*}
        \frac{w^{(k)}_{t}}{w^{(k)}_{t-1}} = \frac{\exp(-l^{(k_*)}_t)}{\exp(-\ell_t)}.
    \end{equation*}
    Therefore, we arrive at a telescoping term,
    \begin{equation*}
        \exp\left(\sum_{t=1}^T \ell^\text{(BMA)}_t - l_{t}^{(k)}\right) = \frac{\cancel{w_1^{(k_*)}}}{w_0^{(k_*)}} \frac{\cancel{w_1^{(k_*)}}}{\cancel{w_2^{(k_*)}}} \cdots \frac{w_T^{(k_*)}}{\cancel{w_{T-1}^{(k_*)}}} = \frac{w^{(k_*)}_T}{w^{(k_*)}_0}.
    \end{equation*}
    Taking the logarithm and plugging in the closed form values \cref{eq:BMA_weights} proves the proposition.
\end{proof}

 \subsection{Example: Online Ensembles of Basis Expansions} \label{app:gp_regret_example}
A recent example of O-BMA being applied in machine learning is in the ensembling of (approximate) Gaussian processes (GPs). In particular, \citet{lu2022incremental} proposes ensembling approximate GPs with O-BMA, with \citet{waxman2024dynamic} generalizing the algorithm and bounds to more general linear basis expansions. They both prove regret bounds relative to any expert with fixed parameters, a setting in which BMA is the appropriate optimizer. 

It will first be useful to state the following lemma:
\begin{lemma}[\citet{kakade2004online}, Theorem 2.2] \label{lemma:kakade}
    Let $\ell(\cdot ; y_\tau)$ denote the negative log-likelihood, and assume it is $\mathcal{C}^2$ with second derivatives bounded by $c\in \mathbb{R}$. We are concerned with the negative predictive log-likelihood $\ell_t$ of a Bayesian linear model with basis expansion $\phi(\cdot) \colon \mathbb{R}^d \to \mathbb{R}^F$, using prior $p(\V\theta) = \mathcal{N}(\V\theta; \V{0}, \sigma_{\V\theta}^2, \V{I}_{F})$.
    
    Let $\V{x}_1, \dots, \V{x}_T$ be a sequence of inputs such that $\lVert \phi(\V{x}_t) \rVert$ is bounded by $1$ for all $t$. Then we have the following bound between the cumulative log-loss of the Bayesian estimator and the log-loss for any fixed value $\V\theta_*$: 
    \begin{equation*}
        \sum_{t=1}^T \ell_t - \ell\left(\phi(\V{x}_t)^\top \V\theta_*; y_t\right) \leq \frac{\lVert \V\theta_* \rVert^2}{2 \sigma_{\V\theta}^2} + \frac{F}{2} \log\left(1 + \frac{T c \sigma_{\V\theta}^2}{F}\right).
    \end{equation*}
\end{lemma}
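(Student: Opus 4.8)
The plan is to reproduce the classical evidence-lower-bound argument of \citet{kakade2004online}. The first step is to invoke the prequential decomposition (as used around \cref{eq:eb_estimator}): by the chain rule the cumulative Bayesian log-loss telescopes into the negative log-evidence,
\[
\sum_{t=1}^T \ell_t = -\log \int p(\V\theta)\prod_{t=1}^T \exp\!\big(-\ell(\phi(\V{x}_t)^\top\V\theta; y_t)\big)\, d\V\theta,
\]
which I denote $-\log Z$. Thus the regret against a fixed $\V\theta_*$ reduces to lower-bounding the single integral $Z$. Writing $f(\V\theta) = \tfrac{1}{2\sigma_{\V\theta}^2}\lVert\V\theta\rVert^2 + \sum_t \ell(\phi(\V{x}_t)^\top\V\theta; y_t)$ for the unnormalized negative log-posterior, we have $Z = (2\pi\sigma_{\V\theta}^2)^{-F/2}\int e^{-f(\V\theta)}\,d\V\theta$, so it suffices to show that $Z$ is not too small.

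Next I would produce a global quadratic upper bound on $f$ centered at $\V\theta_*$. Each likelihood term contributes a Hessian $\ell''\cdot\phi(\V{x}_t)\phi(\V{x}_t)^\top \preceq c\,\phi(\V{x}_t)\phi(\V{x}_t)^\top$ by the second-derivative bound, so $\nabla^2 f(\V\theta) \preceq A$ uniformly in $\V\theta$, with $A = \sigma_{\V\theta}^{-2}\V{I} + c\sum_t \phi(\V{x}_t)\phi(\V{x}_t)^\top$. Taylor's theorem then gives $f(\V\theta) \le f(\V\theta_*) + \nabla f(\V\theta_*)^\top(\V\theta-\V\theta_*) + \tfrac12(\V\theta-\V\theta_*)^\top A(\V\theta-\V\theta_*)$; substituting this under the integral yields an exactly solvable Gaussian integral. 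Discarding the nonnegative term $\tfrac12\nabla f(\V\theta_*)^\top A^{-1}\nabla f(\V\theta_*)$ produces
\[
-\log Z \le f(\V\theta_*) + F\log\sigma_{\V\theta} + \tfrac12\log|A|.
\]
Subtracting $\sum_t \ell(\phi(\V{x}_t)^\top\V\theta_*;y_t)$ isolates the regret, leaving $\tfrac{1}{2\sigma_{\V\theta}^2}\lVert\V\theta_*\rVert^2 + F\log\sigma_{\V\theta} + \tfrac12\log|A|$.

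Finally, I would simplify the determinant term. Factoring $\sigma_{\V\theta}^{-2}$ out of $A$ cancels the $F\log\sigma_{\V\theta}$, leaving $\tfrac12\log\big|\V{I} + c\sigma_{\V\theta}^2\sum_t\phi(\V{x}_t)\phi(\V{x}_t)^\top\big|$. Writing this as $\tfrac12\sum_i\log(1 + c\sigma_{\V\theta}^2\lambda_i)$ over the eigenvalues $\lambda_i$ of $\sum_t\phi(\V{x}_t)\phi(\V{x}_t)^\top$, and using the trace constraint $\sum_i\lambda_i = \sum_t\lVert\phi(\V{x}_t)\rVert^2 \le T$ together with concavity of $\log$, Jensen's inequality bounds the sum by $\tfrac{F}{2}\log(1 + Tc\sigma_{\V\theta}^2/F)$, matching the stated right-hand side.

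I expect the main obstacle to be the Hessian/Taylor step: the inequality $f(\V\theta)\le f(\V\theta_*)+\cdots$ requires the upper bound $\nabla^2 f \preceq A$ to hold \emph{globally}, not merely at $\V\theta_*$, which is exactly what allows the Gaussian integral to be evaluated in closed form and the gradient cross-term to be dropped cleanly. Verifying that the scalar bound $c$ on $\ell''(\cdot;y)$ transfers to a uniform operator-norm bound on the full Hessian --- and that $\V\theta_*$ need not be the posterior mode for the argument to go through --- is the delicate part; the determinant and concavity estimates are then routine.
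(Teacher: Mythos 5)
Your proof is correct, but there is nothing in the paper to compare it against: \cref{lemma:kakade} is stated as an imported result, credited to \citet{kakade2004online} (their Theorem 2.2), and is used as a black box in the proof of \cref{thm:oebe_bound}; the paper itself supplies no proof. What you have written is therefore a correct, self-contained reconstruction of the cited result rather than an alternative to an in-paper argument. Each step checks out: the prequential/chain-rule identity $\sum_{t} \ell_t = -\log Z$ is exactly the right starting point for the Bayesian predictor; the chain rule gives $\nabla^2 f(\V\theta) = \sigma_{\V\theta}^{-2}\V{I}_F + \sum_t \ell''\bigl(\phi(\V{x}_t)^\top\V\theta; y_t\bigr)\,\phi(\V{x}_t)\phi(\V{x}_t)^\top \preceq A$ \emph{globally}, provided one reads ``second derivatives bounded by $c$'' as a uniform bound $\ell'' \le c$ with $c \ge 0$ over all arguments and all observed $y_t$ (the intended reading, and the point you rightly flag as delicate); Taylor with integral remainder then yields the quadratic majorant at an arbitrary $\V\theta_*$ with no mode assumption; the Gaussian integral converges because $A \succ 0$; the cross term $\tfrac12 \nabla f(\V\theta_*)^\top A^{-1} \nabla f(\V\theta_*) \ge 0$ may be discarded; and the determinant--trace step $\log\det(\V{I}+M) \le F \log\bigl(1 + \operatorname{tr}(M)/F\bigr)$ with $\operatorname{tr}(M) \le c\sigma_{\V\theta}^2 T$ (using $\lVert \phi(\V{x}_t)\rVert \le 1$) gives exactly the stated constant. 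One remark: your Laplace-type Gaussian lower bound on the evidence is interchangeable with a variational route --- take $q = \mathcal{N}(\V\theta_*, \epsilon^2 \V{I}_F)$, apply $-\log Z \le \mathbb{E}_q[L] + \mathrm{KL}(q \,\Vert\, p)$, and optimize $\epsilon^2$ --- which yields the identical bound while avoiding any closed-form integral; either derivation would be a legitimate appendix proof if the authors wished to make the lemma self-contained.
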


In terms of the time horizon, the bound promised by \cref{lemma:kakade} is clearly $\mathcal{O}(\log T)$ in $T$.

We now formally state and prove the theorem.

\begin{theorem} \label{thm:oebe_bound}
    Let the negative log-likelihood $\mathcal{\ell}(\cdot; y_\tau)$ be $\mathcal{C}^2$ with second derivatives bounded by $c \in \mathbb{R}$. We then consider an online ensemble of basis functions \citep{waxman2024dynamic} with basis expansions $\phi^{(k)} \colon \mathbb{R}^{d_X} \to \mathbb{R}^{F_k}$ and priors $p(\mathbf{\theta}^{(k)}) = \mathcal{N}(\V\theta^{(k)} ; \mathbf{0}, {\sigma_{\V\theta}^{(k)}}^2 \mathbf{I}_{F_k})$ for $k \in \{1, \dots, K\}$. Further, assume that $\lVert \phi^{(k)} \rVert$ is bounded by $1$.

    We will consider the log-loss of the ensemble at some pre-selected time $T$, denoted $\ell_T$, and its regret with respect to the performance of any single expert $k$ with a fixed parameter $\V\theta_*^{(k)}$. Then \textbf{(a)} using O-BMA, the resulting regret is $\mathcal{O}(\log T)$ in $T$; \textbf{(b)} if we further assume that the log-loss is bounded, then the resulting regret remains $\mathcal{O}(\log T)$ using OBS with ONS as the optimizer. 
\end{theorem}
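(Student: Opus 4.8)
The plan is to prove both parts by the same two-stage decomposition, routing the regret through an intermediate comparator: the cumulative Bayesian predictive loss of a single model $k$, which I write as $\sum_t \ell_t^{(k)} = -\sum_t \log p_k(y_t \given \V{x}_t, \mathcal{D}_{t-1})$. The first stage bounds the regret of the ensemble loss against this intermediate quantity — this is the only place where O-BMA and OBS differ — and the second stage applies \cref{lemma:kakade} to bound each single Bayesian model's loss against its fixed-parameter predictor $\ell(\phi^{(k)}(\V{x}_t)^\top \V\theta_*^{(k)}; y_t)$. Because both stages are $\mathcal{O}(\log T)$ (or better), their sum is $\mathcal{O}(\log T)$, and the argument holds simultaneously for every expert $k$ since the second stage applies verbatim to each model in the ensemble.

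For part (a), the first stage is immediate from \cref{eq:bma_lemma}. Writing the ensemble loss as $\ell_t^{\text{BMA}} = -\log \sum_k w_{t,k} p_k(y_t \given \V{x}_t, \mathcal{D}_{t-1})$, the lemma gives $\sum_t \ell_t^{\text{BMA}} - \sum_t \ell_t^{(k^*)} = \log[\Pr(\mathcal{M}_{k^*}\given\mathcal{D}_T)/\Pr(\mathcal{M}_{k^*})] \le \log K$ under a uniform prior, where $k^*$ is the model with smallest cumulative Bayesian loss. Since $k^*$ is that minimizer, $\sum_t \ell_t^{(k^*)} \le \sum_t \ell_t^{(k)}$ for every $k$, so the ensemble's regret against \emph{any} single model is also at most the constant $\log K$. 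Adding the $\mathcal{O}(\log T)$ bound of \cref{lemma:kakade} applied to model $k$ gives total regret at most $\log K + \frac{\lVert \V\theta_*^{(k)} \rVert^2}{2{\sigma_{\V\theta}^{(k)}}^2} + \frac{F_k}{2}\log\left(1 + Tc{\sigma_{\V\theta}^{(k)}}^2/F_k\right) = \mathcal{O}(\log T)$.

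For part (b), the first stage is replaced by an online portfolio regret bound. Here $\ell_t(\V{w}) = -\log(\V{w}^\top \V{p}_t)$ with $\V{p}_t$ the vector of predictive densities, and OBS run with ONS guarantees $\mathcal{O}(\log T)$ regret against the best fixed weights, i.e. the BCRP $\V{w}^*$ (the logarithmic bound of \citet[Theorem 1]{agarwal2006algorithms}). The single-expert comparator is the simplex vertex $\V{e}_k$, whose cumulative loss is exactly $\sum_t \ell_t^{(k)}$; since the BCRP minimizes cumulative loss over the whole simplex, $\sum_t \ell_t(\V{w}^*) \le \sum_t \ell_t(\V{e}_k) = \sum_t \ell_t^{(k)}$, so OBS's regret against $\sum_t \ell_t^{(k)}$ is also $\mathcal{O}(\log T)$. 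Composing with \cref{lemma:kakade} as before yields the stated $\mathcal{O}(\log T)$ bound, now as a sum of two $\mathcal{O}(\log T)$ terms rather than the single one in part (a) — consistent with the ``worse constants'' remark in \cref{sec:oco}.

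The hard part will be justifying that ONS genuinely attains logarithmic regret in the first stage of part (b), which is exactly where the extra hypothesis enters. The ONS guarantee needs $\ell_t$ to be exp-concave with bounded gradients: exp-concavity is automatic since $\exp(-\ell_t) = \V{w}^\top \V{p}_t$ is linear, but the regret constant scales with a gradient-norm bound $G$ that is finite only when the market variability parameter is bounded. I would show that a bounded log-loss forces $p_{t,k} \in [e^{-B}, e^{B}]$, so $\V{w}^\top \V{p}_t \ge \min_k p_{t,k} > 0$ and $\lVert \nabla \ell_t \rVert = \lVert \V{p}_t \rVert/(\V{w}^\top \V{p}_t)$ is uniformly bounded, giving a finite $G$ and hence a finite ONS constant; I would then track how $G$ propagates into the logarithmic coefficient. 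Aside from this reduction, both parts are routine compositions of the two cited lemmas with the monotonicity and vertex-domination arguments above.
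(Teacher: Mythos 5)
Your proposal is correct and follows essentially the same route as the paper's proof: the same two-stage decomposition (ensemble-to-single-expert regret, then single-expert-to-fixed-parameter regret via \cref{lemma:kakade}), with stage one handled by the BMA telescoping identity of \cref{eq:bma_lemma} for part (a) and by the ONS logarithmic regret bound against the best fixed mixture, dominated over simplex vertices, for part (b). If anything, your write-up is tighter than the paper's: you state the vertex-domination inequality cleanly and make explicit how the bounded log-loss hypothesis yields the bounded gradients that the ONS guarantee requires, a point the paper leaves implicit by absorbing it into the $\mathcal{O}(\log T)$ constants.
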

\begin{proof}
    The result \textbf{(a)} is proved directly in \citet[Theorem 1]{waxman2024dynamic}, which is adapted from \citet[Lemma 2]{lu2022incremental}. What remains to show is \textbf{(b)}. To emphasize the similarity in proving the O-BMA and OBS results, we will present them side by side.
    
    Let $l_t^{(k)}$ denote the log-loss of the $k$th expert at time $t$. The proof then proceeds in two steps: first, bounding the loss of the ensemble estimate to any individual expert, and then applying \cref{lemma:kakade} and combining the bounds.
    
    \paragraph{Bounding Ensemble Losses to Experts} We proceed by first bounding the ensemble loss to the loss of any individual expert. Beginning with O-BMA, and proceeding identically to  \cref{eq:bma_lemma}, \citet{lu2022incremental} make the observation that
    \begin{equation*}
        \frac{w^{(k)}_{t-1}}{w^{(k)}_{t}} = \frac{\exp(-\ell_t)}{\exp(-l^{(k)}_t)}.
    \end{equation*}
    Therefore, using initial weights $w^{(k)} = 1 / K$, 
    \begin{equation*}
        \exp\left(-\sum_{t=1}^T \ell^\text{(BMA)}_t +l_{t}^{(k)}\right) = \frac{w_0^{(k)}}{\cancel{w_1^{(k)}}} \frac{\cancel{w_1^{(k)}}}{\cancel{w_2^{(k)}}} \cdots \frac{\cancel{w_{T-1}^{(k)}}}{w_T^{(k)}} = \frac{1}{Mw^{(k)}_T}.
    \end{equation*}
    Thus, we can bound the regret of O-BMA with any individual expert as
    \begin{equation*}
        \sum_{t=1}^T \ell^\text{(O-BMA)}_t - l_{t}^{(k)} \leq \log M.
    \end{equation*}

    For OBS, we apply the bound of \citet[Theorem 2]{hazan2007logarithmic}, yielding
    \begin{equation*}
        \sum_{t=1}^T \ell^\text{(OBS)}_t - l_{t}^{(k)} \leq \sum_{t=1}^T \ell^\text{(OBS)}_t - \left(\sum_m w^*_m l_{t}^{(k)}\right) \leq \mathcal{O}(\log T),
    \end{equation*}
    where Big O notation absorbs constants related to $M$ and the maximum gradient norm.

    \paragraph{Bounding Expert Loss} Now, we bound the cumulative loss $\sum_{t=1}^T l_{t}^{(k)}$ to the loss using the fixed parameter $\V\theta_*^{(k)}$. This is achieved by applying \cref{lemma:kakade} to the $k$th expert, which results in 
    \begin{equation*}
        \sum_{t=1}^T l_{t}^{(k)}  - \ell\left(\phi^{(k)}(\V{x}_t)^\top \V\theta_*; y_t\right) \leq \frac{\lVert \V\theta_* \rVert^2}{2 {\sigma_{\V\theta}^{(k)}}^2} + \frac{F_k}{2} \log\left(1 + \frac{T c{\sigma_{\V\theta}^{(k)}}^2}{F_k}\right) \in \mathcal{O}(\log T).
    \end{equation*}

Combining the two bounds completes the proof.
\end{proof}

\section{Experimental Setup} \label{sec:experimental_setup_and_code}
In this appendix, we discuss our experimental setup, including the OCO algorithms and hyperparameters used throughout.

\subsection{Experimental Setup}
All experiments were conducted using Ubuntu 22.04 with an Intel i9-9900K CPU with 128 GB of RAM and two NVIDIA Titan RTX GPUs. 
Code for our methods and experiments is available online under an MIT License.
\footnote{\url{https://github.com/DanWaxman/OnlineBayesianStacking}, MIT License}
Implementations were in Jax/Objax \citep{jax2018github,objax2020github}\footnote{\url{https://github.com/google/objax}, Apache 2.0 License} based on modifying the codes of \citet{waxman2024dynamic}\footnote{\url{https://github.com/DanWaxman/DynamicOnlineBasisExpansions}, MIT License} and \citet{jones2024bayesian}\footnote{\url{https://github.com/petergchang/bong/tree/main}, MIT License}. Optimization algorithm implementations were adapted from and tested against the Universal Portfolios library \citep{universalportfolios_github}\footnote{\url{https://github.com/Marigold/universal-portfolios}, MIT License}.

\subsection{Baselines}
\paragraph{Best Constantly Rebalanced Portfolio} The BCRP is the so-called ``static'' solution to our stacking problem, equivalent to the offline versions of Bayesian stacking presented in \citet{hall2007combining,geweke2011optimal}. The resulting weights are a solution to \cref{eq:bayesian_stacking_optimization_problem_timeseries}, i.e.,
\begin{equation}
    \V{w}^* \coloneq \argmax_{\V{w} \in \mathbb{S}^K} \sum_{t=1}^{T} \log \sum_{k=1}^K w_k p_k(y_{t} \given \V{x}_{t}, \mathcal{D}_{1:t-1}).
\end{equation}
In regression problems without concept drift, this forms a reasonable baseline against which to measure regret.

\paragraph{Online Bayesian Model Averaging} Our main baseline is O-BMA, which we consider to be the standard online ensembling approach in Bayesian applications. The weights are updated as \cref{eq:BMA_weights_online}, i.e.,
\begin{equation} 
    w_{t+1,k}^{\text{BMA}} = \frac{ w_{t, k} p_k(y_{t+1} \given \V{x}_{t+1}, \mathcal{D}_t)}{\sum_k w_{t, k} p_k(y_{t+1} \given \V{x}_{t+1}, \mathcal{D}_t)}.
\end{equation}

\paragraph{Dynamic Model Averaging} In potentially non-stationary environments, we also compare to dynamic model averaging (DMA) \citep{raftery2010online}, which is essentially O-BMA with a forgetting factor $\gamma\in(0, 1)$:
\begin{equation} \label{eq:DMA_weights_online}
    w_{t+1,k}^{\text{DMA}} = \frac{ w_{t, k}^\gamma p_k(y_{t+1} \given \V{x}_{t+1}, \mathcal{D}_t)}{\sum_k w_{t, k}^\gamma p_k(y_{t+1} \given \V{x}_{t+1}, \mathcal{D}_t)}.
\end{equation}
While DMA loses some of the nice statistical properties of O-BMA, it can be more robust in non-stationary scenarios, where the best expert may change over time. For the forgetting factor, we follow \citet{raftery2010online} and use a value of $\gamma = 0.99$.

\subsection{Portfolio Selection Algorithms}
We include several OPS algorithms in our comparisons. We provide brief overviews of each and our default set of hyperparameters, below.

\paragraph{Exponentiated Gradients} The EG algorithm \citep{helmbold1998line} for portfolio selection follows the updates in \cref{eq:exponentiated_gradients_portfolio}, i.e.,
\begin{equation}
    \V{w}_{t+1} = \frac{\V{w}_t \odot \exp(\eta \V{r}_t / \V{w}_t^\top \V{r}_t)}{\lVert \V{w}_t \odot \exp(\eta \V{r}_t / \V{w}_t^\top \V{r}_t) \rVert_1},
\end{equation}
where $\eta$ is a learning rate parameter. We choose a default value of $\eta = 10^{-2}$ in our experiments.

\paragraph{Soft-Bayes} The Soft-Bayes algorithm \citep{orseau2017soft} provides several different potential updates, with regret guarantees independent of a market variability parameter $\alpha$. For our purposes, one useful formulation is the ``online'' variant of \citet[Section 6]{orseau2017soft}, which updates weights using a time-varying learning rate $\eta_t$ as
\begin{equation}
    w_{t+1, k} = w_{t, k} \left(1 - \eta_t + \eta_t \frac{r_t^k}{\sum_k w_{t, k} r_{t, k} }\right) \frac{\eta_{t+1}}{\eta_t} + \left(1 - \frac{\eta_{t+1}}{\eta_t}\right)w_{0, k}.
\end{equation}
Square-root regret is then guaranteed \citep[Theorem 10]{orseau2017soft} with the learning rate 
\begin{equation}
    \eta_t = \frac{\log K}{2 K t},
\end{equation}
which we use in our experiments.

\paragraph{Online Newton Step} In our experiments, we use the form of ONS specialized for the OPS problem in \citet{agarwal2006algorithms}. This algorithm requires parameters $\eta, \beta, \delta$ and keeps track of quantities $\V{A}_{t}$ and $\V{b}_t$, defined as 
\begin{align}
        \V{A}_t &= \sum_{\tau=1}^t -\nabla^2\left[\log (\V{w}_\tau \cdot \V{r}_t) \right] + \mathbb{I}_K;\\\V{b}_t &= \left(1 + \frac{1}{\beta}\right) \sum_{\tau=1}^t \nabla\left[\log (\V{w}_\tau \cdot \V{r}_t) \right].
\end{align}
Weights are then obtained at each iteration by projecting onto the simplex,
\begin{equation}
    \V{w}_{t+1} = \Pi_{\mathbb{S}^K}^{\V{A}_{t}}\left(\delta \V{A}_t^{-1} \V{b}_t\right),
\end{equation}
where the projection operator $\Pi_{\mathbb{S}^K}^{\V{A}_{t}}$ is defined as
\begin{equation}
    \Pi_{\mathbb{S}^K}^{\V{A}_{t}}(\V{v}) = \argmin_{\V w} (\V{w} - \V{v})^\top \V{A} (\V{w} - \V{v}).
\end{equation}
Additionally, to help with very small market variability parameters, gradients may be artificially ``smoothed'' as
\begin{equation}
    \Tilde{\V{w}}_t = (1 - \eta) \V{w}_t + \frac{1}{K} \mathbf{1}.
\end{equation}
We use default values of $\delta = 0.8$ and $\beta = \eta = 10^{-2}$.

\paragraph{Discounted Online Newton Step} The D-ONS algorithm \citep{yuan2020trading,ding2021discounted} is similar to the ONS algorithm (as in \citet{hazan2007logarithmic}), but it uses a forgetting factor for the second-order information. We use the formulation of \citet{ding2021discounted}, which the authors claim is more numerically stable than the discounting factor used in \citet{yuan2020trading}. The D-ONS algorithm depends on parameters $\eta > 0$ and $\gamma \in (0, 1)$ and keeps track of a quantity 
\begin{equation}
    P_{t+1} = (1- \gamma) P_0 + \gamma P_{t} - \nabla^2_t \left[\log (\V{w}_\tau \cdot \V{r}_t)\right].
\end{equation}
Updates are performed as 
\begin{equation}
    \V{w}_{t+1} = \Pi^{P_t}_{\mathbb{S}^K} \left(\V{w}_t - \frac{1}{\eta} P_t^{-1} \nabla\left[\log (\V{w}_\tau \cdot \V{r}_t)\right]\right).
\end{equation}
We use default values of $\eta = 1.0$ and $\gamma=0.99$.

\section{Details and Weights for the Subset of Linear Regressors Experiment} \label{app:more_linear_regressors}

In this appendix, we provide more details on the experimental setup and results for the subset of the linear regression experiment in \cref{sec:exp_subset}.

\subsection{Experimental Details}

\paragraph{Data Generation}
Following \citet{breiman1996stacked} and \citet[Section 4.2]{yao2018using}, we generate 6000 data points i.i.d. according to:
\begin{align*}
    \V{x}_t &\sim \mathcal{N}(5\cdot \V{1}_{15}, \V{I}_{15}), \\
    y_t \given \V{x}_t &\sim \mathcal{N}(\V{x}_t^\top \V{\theta}, 1).
\end{align*}
The ground truth parameters $\V{\theta} \in \mathbb{R}^{15}$ are generated such that all 15 components are individually weak predictors, and the signal-to-noise ratio is $0.8$; refer to \citet[Section 4.2]{yao2018using} for the precise procedure used to generate $\V{\theta}$.

\paragraph{Model Descriptions}
We consider two ensembles, each comprising $K=15$ Bayesian linear regression models:
\begin{itemize}
    \item \textbf{``Open'' setting:} Model $k$ ($k=1,\dots,15$) is a univariate regression attempting to learn the function $y = \theta_k x_k$, using only the $k$-th component of $\V{x}_t$.
    \item \textbf{``Closed'' setting:} Model $k$ ($k=1,\dots,15$) uses the first $k$ components of $\V{x}_t$, i.e., attempting to learn $y = \sum_{j=1}^k \theta_j x_j$. Model $k=15$ uses all features and corresponds to the true data-generating family.
\end{itemize}
Standard conjugate priors were used for the parameters $\theta_k$ (or vectors thereof) in each model.

\paragraph{Hyperparameter Specification}
For both scenarios, the first 1000 data points were used to set the hyperparameters of the base linear regression models (e.g., prior variances) via empirical Bayes (type-II maximum likelihood on the marginal likelihood over these 1000 points). The subsequent 5000 points were processed online. This was accomplished by modifying the code of \citet{waxman2024dynamic}.

The OCO hyperparameters for OBS were set to reasonable default values and were not tuned further:
\begin{itemize}
    \item Exponentiated Gradients (EG): Learning rate $\eta = 10^{-2}$.
    \item Online Newton Step (ONS): Parameters $\delta = 0.8$, $\eta = 10^{-2}$, $\beta = 10^{-2}$.
\end{itemize}

\section{Details and Weights for the Online Variational Inference Experiment} \label{app:more_mnist}

In this appendix, we provide more details on the experimental setup and results for the online variational inference experiment in \cref{sec:exp_bnn}. 

\subsection{Experimental Details}
As mentioned in the main text, we use the \texttt{bong-dlr10-ef\_lin} model from \citet{jones2024bayesian}, which had the best performance among the variants tested in their experiments. The BONG performs variational inference using natural gradients, updating variational parameters $\V{\psi}_t$ as
\begin{equation*}
    \V{\psi}_{t+1} = \V{\psi}_t + \V{F}_{\V{\psi}_t}^{-1} \nabla_{\V{\psi}_t} \mathbb{E}_{\V{\theta}_t \sim q_{\V{\psi}_{t}}}[ \log p(\V{y}_t \given \mathbf{x}_t, \V{\theta}_t)],
\end{equation*}
where $\V{F}_{\V{\psi}_t}$ is the Fisher information matrix and $q_{\V{\psi}_{t}}$ is the variational posterior. The \texttt{bong-dlr10-ef\_lin} variant uses a variational family of Gaussians with low-rank covariance matrices (diagonal + a rank $10$ matrix) and approximates the predictive likelihood through a linearization using the empirical Fisher information matrix. We use the authors' implementation of BONG.\footnote{\url{https://github.com/petergchang/bong/}, MIT License}

Our experimental setup generally mirrors that of \citet{jones2024bayesian}, with the following exceptions: we use a feedforward neural network with layers of width $64$ and $32$. The prior mean is sampled from the default \texttt{flax} initialization, and we form an ensemble over the prior variances $\sigma^2_0 \in \{10^{-2}, 10^{-1}, 10^{0}, 10^{1}\}$. Each ``trial'' corresponds to a different random shuffling of the training data, with the first 2000 data points used for inference.

\subsection{Weight Evolutions and Final Weights}
Once again, we visualize the resulting weights in \cref{fig:weights_bong} and \cref{fig:weights_bong_evolution}. We find a remarkable similarity between the final weights in all OCO algorithms tested and the BCRP solution, and that once again, BMA incorrectly collapses to a single model.

\begin{figure*}
    \centering
    \includegraphics[width=0.7\textwidth]{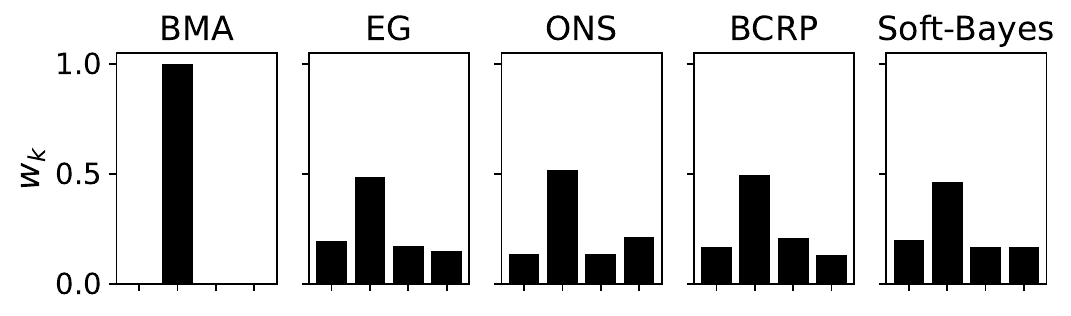}
    \caption{The final weights in the online variational inference experiment.}
    \label{fig:weights_bong}
\end{figure*}

\begin{figure*}
    \centering
    \includegraphics[width=0.7\textwidth]{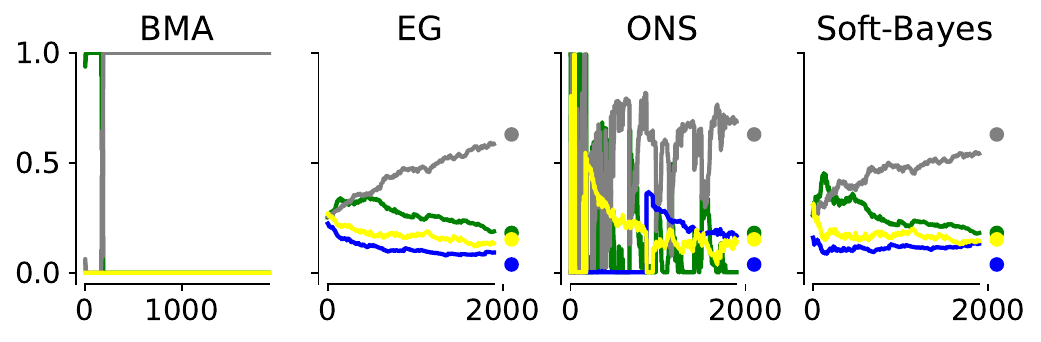}
    \caption{The evolution of the weight vector $\V{w}_t$ as a function of $t$ in the online variational inference experiment. Results are shown for the last trial. Dots on the right side of a plot denote the final weights of the BCRP.}
    \label{fig:weights_bong_evolution}
\end{figure*}

\FloatBarrier
\section{Details and Weights for the Forecasting Experiment} \label{app:more_garch}

In this appendix, we provide more details on the experimental setup and results for the forecasting experiment in \cref{sec:exp_forecasting}.

\subsection{Experimental Details}
In the forecasting experiment, we use real data corresponding to the daily returns of the S\&P 500 index from 2015 through 2020, consisting of 1257 unique observations. In econometrics, such data are often modeled using Generalized Autoregressive Conditional Heteroskedasticity (GARCH) approaches, which provide probabilistic predictions for time series. We refer the reader to \cite{engle2001garch} for a more detailed introduction to GARCH models.

\paragraph{Specification of Models} 
In \cite{geweke2011optimal,geweke2012prediction}, the application of GARCH models to stock index data is also used to test the ensembling of probabilistic models. In GARCH, the conditional variance of the process is modeled as an autoregressive process depending on the lagged values of the conditional variance and the squared ``innovations.'' For instance, the GARCH$(1,1)$ model with Gaussian innovations is given by:
\begin{align}
    y_{t} &= \sigma_t\epsilon_t, \enskip \epsilon_t \sim \mathcal{N}(0,1) \\
    \sigma^2_{t+1} &= \alpha_0 + \alpha_1\epsilon^2_t + \beta\sigma^2_{t}, \label{eq_condvar_garch}
\end{align}
where $\epsilon_t$ are the innovations and $\sigma_t$ is the conditional variance of the process. For brevity, we only discuss the GARCH$(1,1)$ with Gaussian innovations. The variants differ in the assumed process for the conditional variance.

We assign truncated Gaussian prior densities to $\boldsymbol{\theta}_t = [\alpha_0,\alpha_1, \beta, \sigma_t]$, and we are interested in sequential estimation of the posterior distribution $p(\boldsymbol{\theta}_t|y_{1:t})$. Sequential Monte Carlo (SMC) algorithms are very well suited for this task \cite{doucet2001introduction}. An SMC algorithm recursively computes a particle approximation of the posterior,
\begin{align}
    p(\boldsymbol{\theta}_t|y_{1:t}) \approx \sum_{i} \rho^{(i)}_t \delta(\boldsymbol{\theta}^{(i)}_t - \boldsymbol{\theta}_t),
\end{align}
where $\boldsymbol{\theta}^{(i)}_t$ and $\rho_t^{(i)}$ denote, respectively, the particles and weights. After the arrival of $y_{t+1}$, the weights are recomputed, and the particles are propagated to form the particle approximation of the posterior at $t+1$. Using this particle approximation, we can obtain an approximation of the posterior predictive density,
\begin{align}
    p(y_{t+1}|y_{1:t}) \approx \sum_{i}\rho_t^{(i)}p(y_{t+1}|y_{1:t},\boldsymbol{\theta}_{t+1}^{(i)}),
\end{align}
where $\boldsymbol{\theta}_{t+1}^{(i)}$ only differs from $\boldsymbol{\theta}^{(i)}_{t}$ in the component $\sigma^{(i)}_{t+1}$, which is obtained by substituting the previous set of particles in Eq. \eqref{eq_condvar_garch}.

We create ensembles using several different variants of the GARCH model, namely GARCHt (Student-t innovations),
GARCHNormal (Normal innovations),
GJR-GARCHNormal (GJR-GARCH with Normal innovations), and 
EGARCHNormal (EGARCH with Normal innovations) \cite{engle2001garch}.

\paragraph{Inference and Hyperparameter Specification} With each variant, we sample two sets of hyperparameters and perform online posterior inference using an SMC algorithm with 1000 particles and 5 Markov Chain Monte Carlo (MCMC) rejuvenation steps. At each iteration, the ensemble weights are used to evaluate, 
\begin{align}
    \sum_{k=1}^8 w_{k,t} p_k(y_{t+1}|y_t) \approx \sum_{k=1}^8 w_{k,t}\sum_{i}\rho_t^{(i)}p_k(y_{t+1}|y_{1:t},\boldsymbol{\theta}_{t+1}^{(i)}),,
\end{align}
where $w_{k,t}$ is the weight of model $k$ obtained at time $t$ using the approaches discussed in the paper. We ran 10 independent simulations of this experiment, considering 10 different random seeds.

\subsection{Weight Evolutions and Final Weights}
We visualize the resulting weights in \cref{fig:weights_garch} and \cref{fig:weights_garch_evolution}. 

\begin{figure*}[th]
    \centering
    \includegraphics[width=0.7\textwidth]{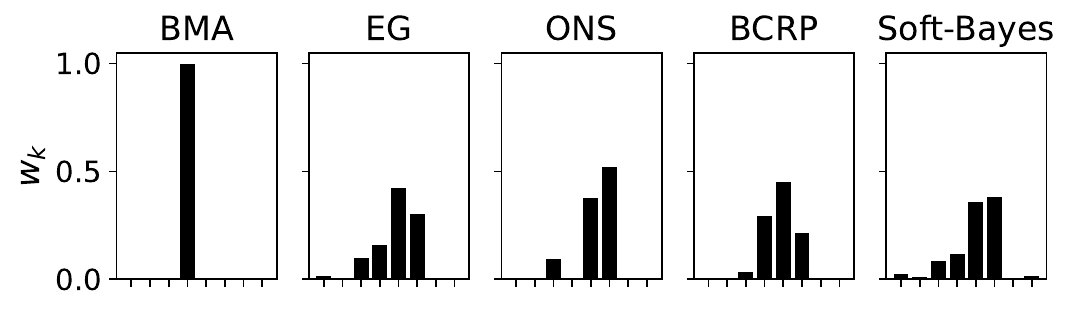}
    \caption{The final weights in the forecasting experiment.}
    \label{fig:weights_garch}
\end{figure*}

\begin{figure*}[th]
    \centering
    \includegraphics[width=0.7\textwidth]{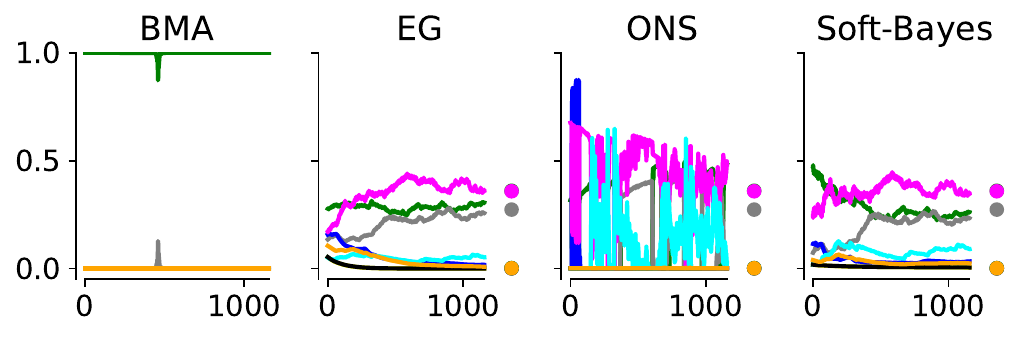}
    \caption{The evolution of the weight vector $\V{w}_t$ as a function of $t$ in the forecasting experiment. Results are shown for the last trial. Dots on the right side of a plot denote the final weights of the BCRP.}
    \label{fig:weights_garch_evolution}
\end{figure*}

\FloatBarrier
\section{Details and Weights for the Non-Stationary Environment Experiment} \label{app:more_doebe}
In our experiment in non-stationary environments, we use the real or semi-real datasets in \citet{waxman2024dynamic}. This comprises the Elevators dataset \citep{elevators}, which originates from tuning elevators on an aircraft; the SARCOS dataset \citep{rasmussen_and_williams}, which is simulated data corresponding to an inverse kinematics problem on a robotic arm; the Kuka \#1 dataset \citep{meier2014incremental}, which is real data from a task similar to SARCOS; CaData \citep{pace1997sparse} which has housing data, and CPU Small \citep{delve}, which includes various performance properties in a database of CPUs. 

As mentioned in the main text, the models considered all belong to the {\it dynamic online ensembles of basis expansions} (DOEBE) family, which perform online GP regression using Kalman filtering and the random Fourier feature approximation \cite{lazaro2010sparse}. \citet{waxman2024dynamic} find that adding a random walk to the model parameters is important in capturing ``dynamic'' (i.e., nonstationary) behavior when applying approximate GPs to several real-world datasets, which can be related to ``back-to-prior forgetting'' \citep{van2012kernel}.

We create ensembles using an RFF GP with $100$ features, trained on the first 1000 data points using the marginal likelihood. We then use different values of the random walk scale ($\sigma_\text{rw} \in \{10^{-4}, 10^{-3}, 10^{-2}, 10^{-1}, 10^{0}\}$) and ensemble them using O-BMA and various OBS algorithms.

In the interest of space, we only report the weight evolutions for SARCOS, Kuka \#1, and Elevators in \cref{fig:weights_doebe_evolution}. The other dataset results are qualitatively similar to Elevators.

\begin{figure*}[t]
     \centering
          \begin{subfigure}[b]{0.3\textwidth}
         \centering
         \includegraphics[width=\textwidth]{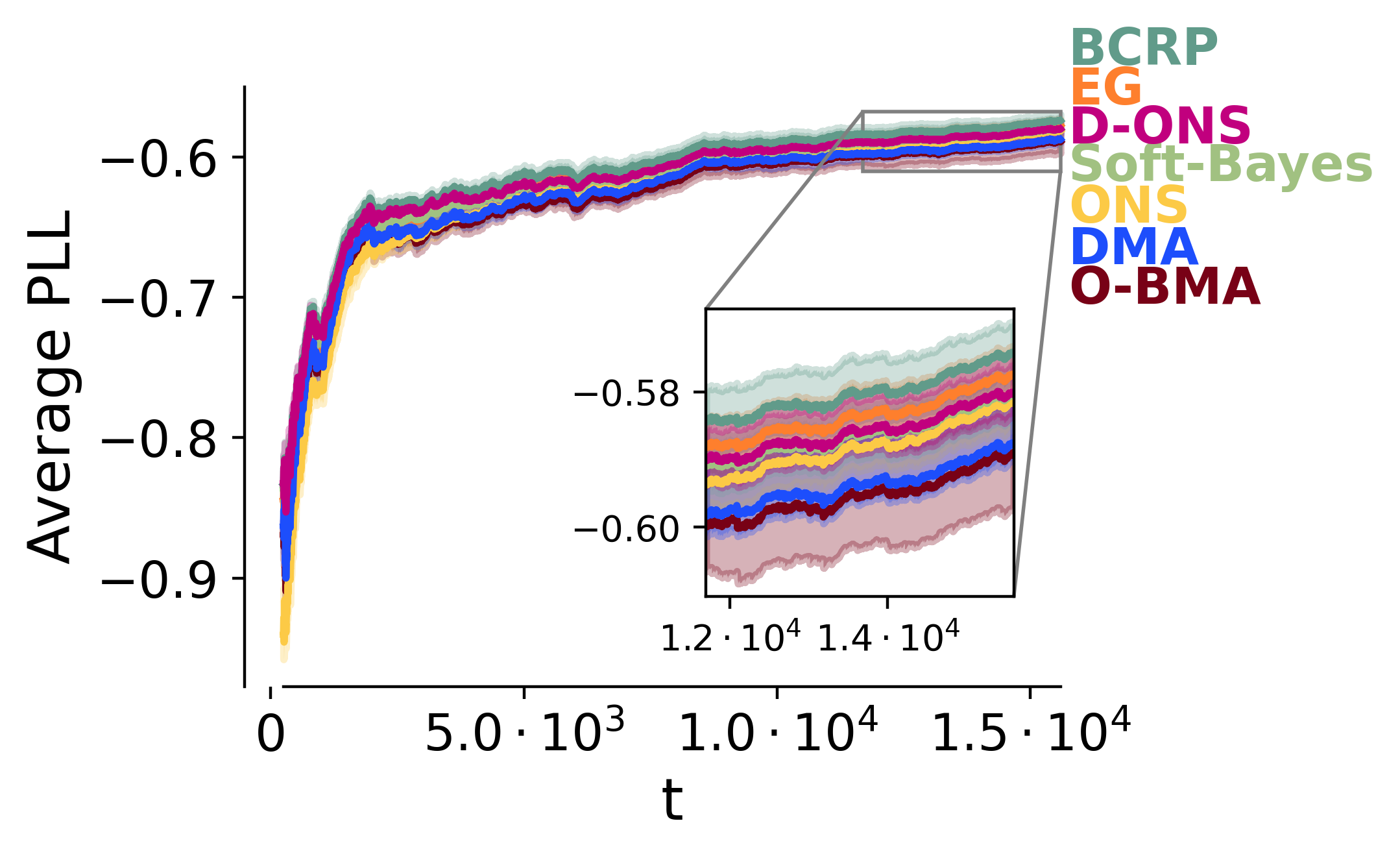}
         \caption{Elevators}
         \label{sf:Elevators_NLL}
     \end{subfigure}
     \begin{subfigure}[b]{0.3\textwidth}
         \centering
         \includegraphics[width=\textwidth]{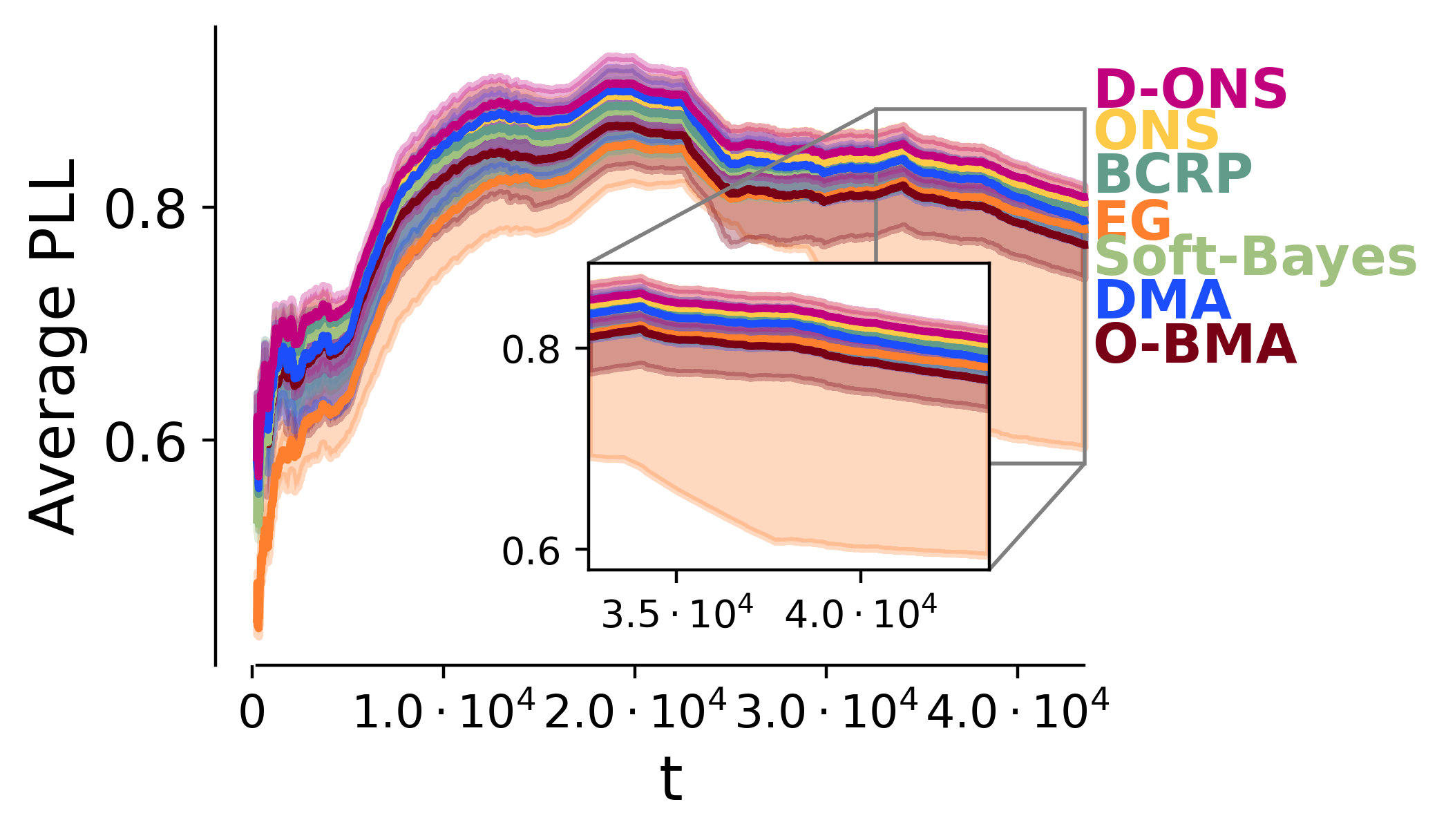}
         \caption{SARCOS}
         \label{sf:SARCOS_NLL}
     \end{subfigure}
     \begin{subfigure}[b]{0.3\textwidth}
         \centering
         \includegraphics[width=\textwidth]{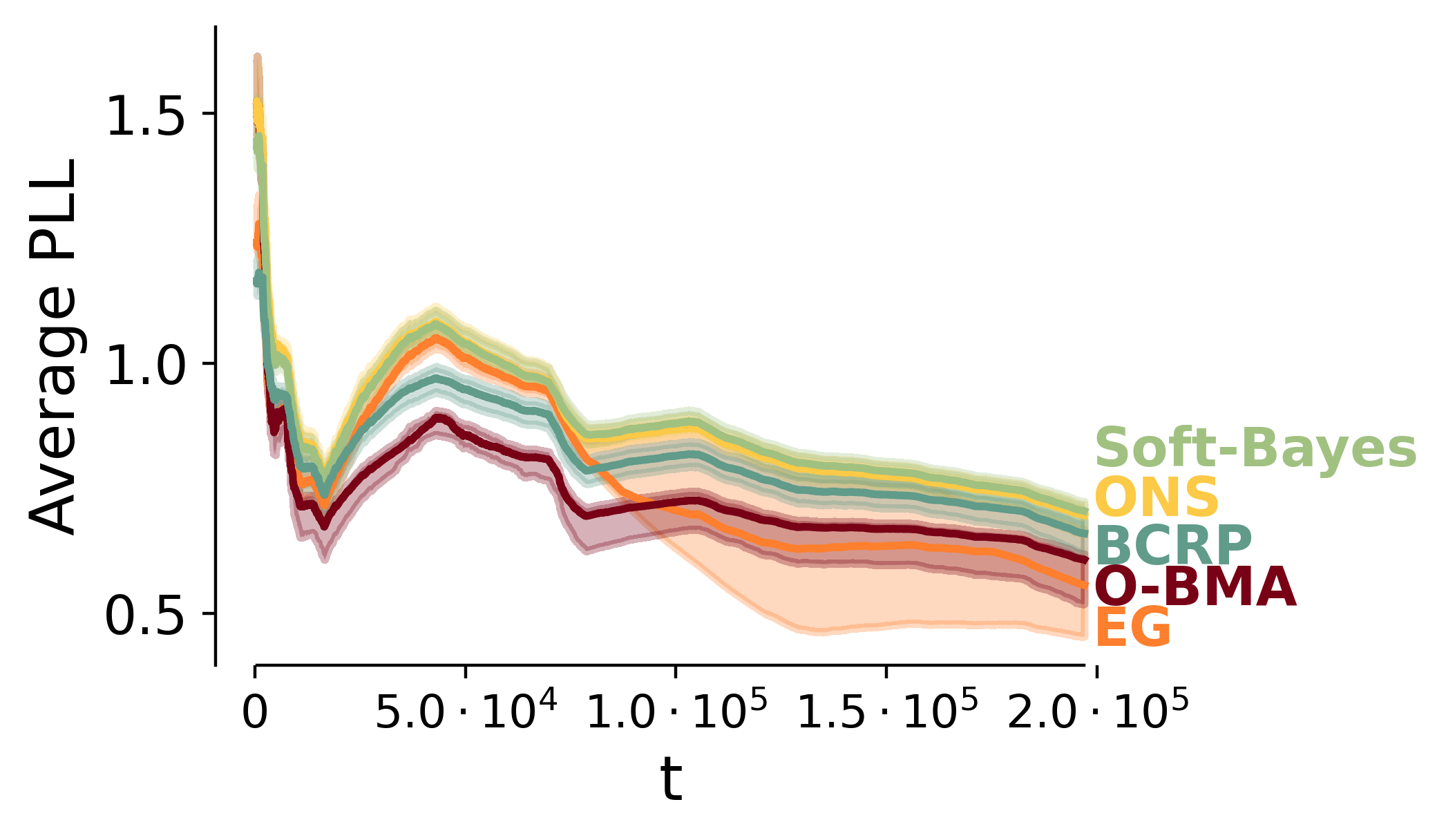}
         \caption{Kuka \#1}
         \label{sf:kuka_NLL}
     \end{subfigure}\\
    \begin{subfigure}[b]{0.3\textwidth}
         \centering
         \includegraphics[width=\textwidth]{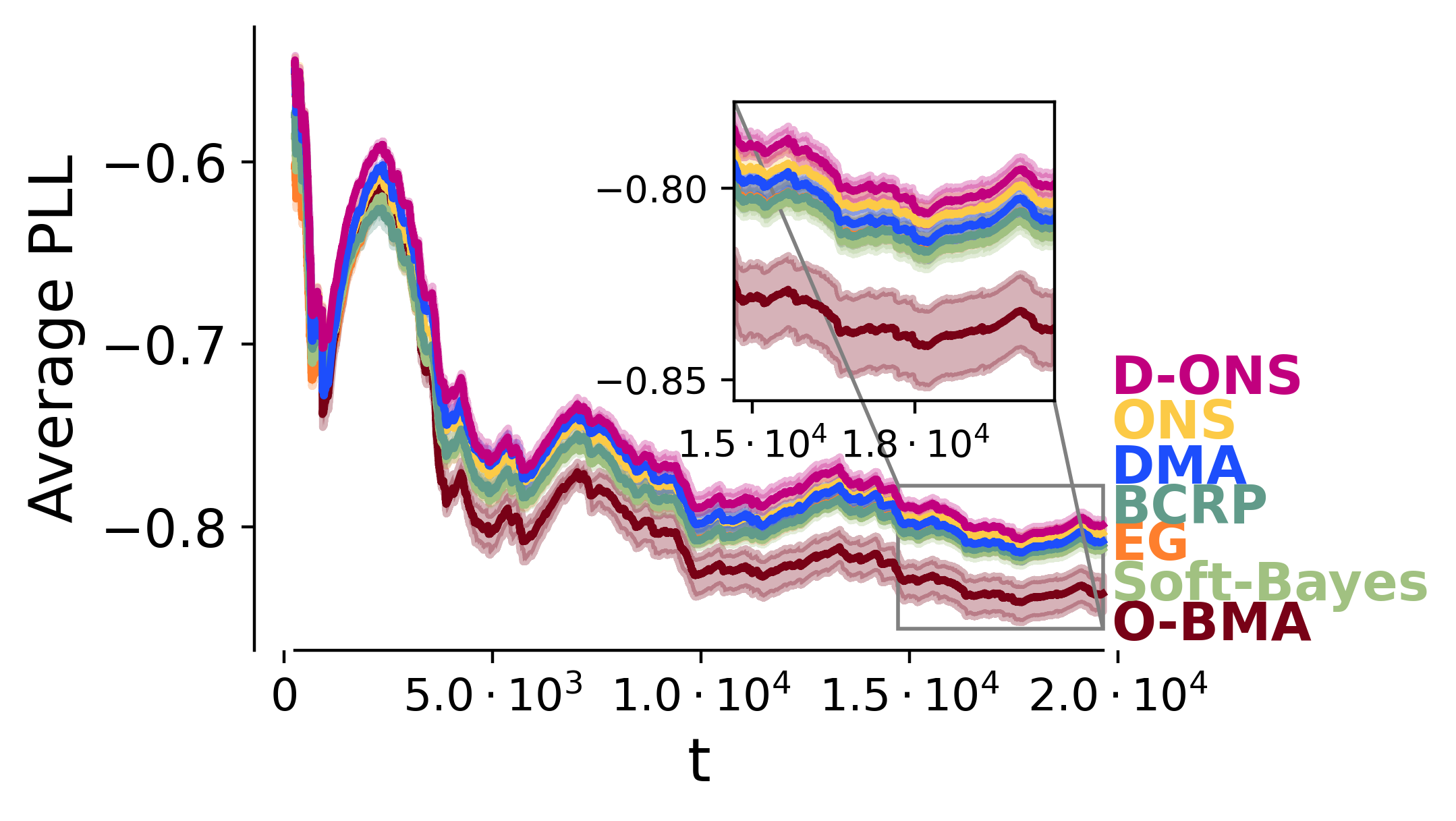}
         \caption{CaData}
         \label{sf:CaData_NLL}
     \end{subfigure}
    \begin{subfigure}[b]{0.3\textwidth}
         \centering
         \includegraphics[width=\textwidth]{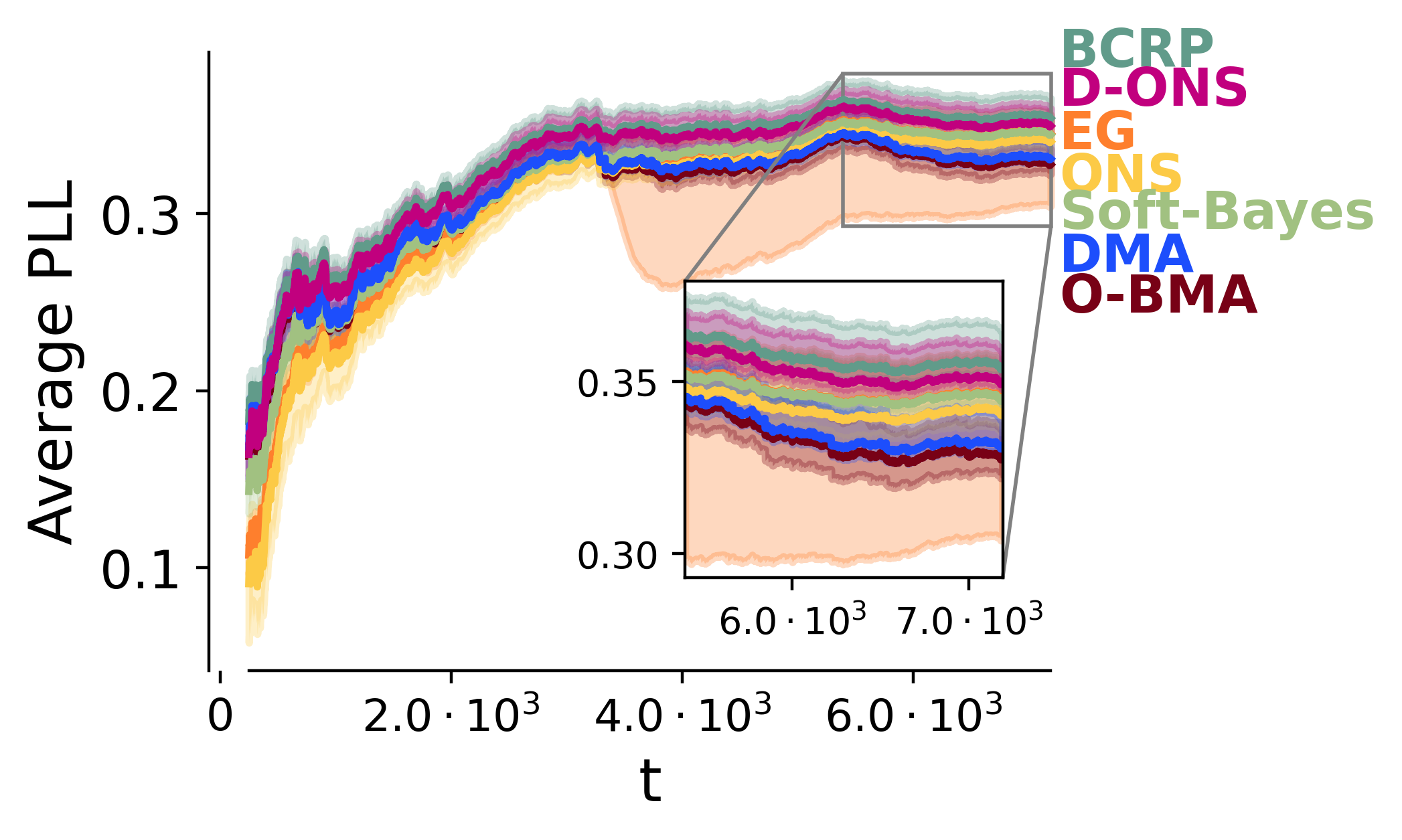}
         \caption{CPU Small}
         \label{sf:CPUSmall_NLL}
     \end{subfigure}
        \caption{The average predictive log-likelihood (higher is better) in the non-stationary experiments. Lines denote the median and shaded area represents the 10th to 90th percentiles over 10 trials. The first 250 samples are suppressed for readability.}
        \label{fig:doebe_results}
\end{figure*}

\begin{figure*}[t]
     \centering
          \begin{subfigure}[b]{0.7\textwidth}
         \centering
         \includegraphics[width=\textwidth]{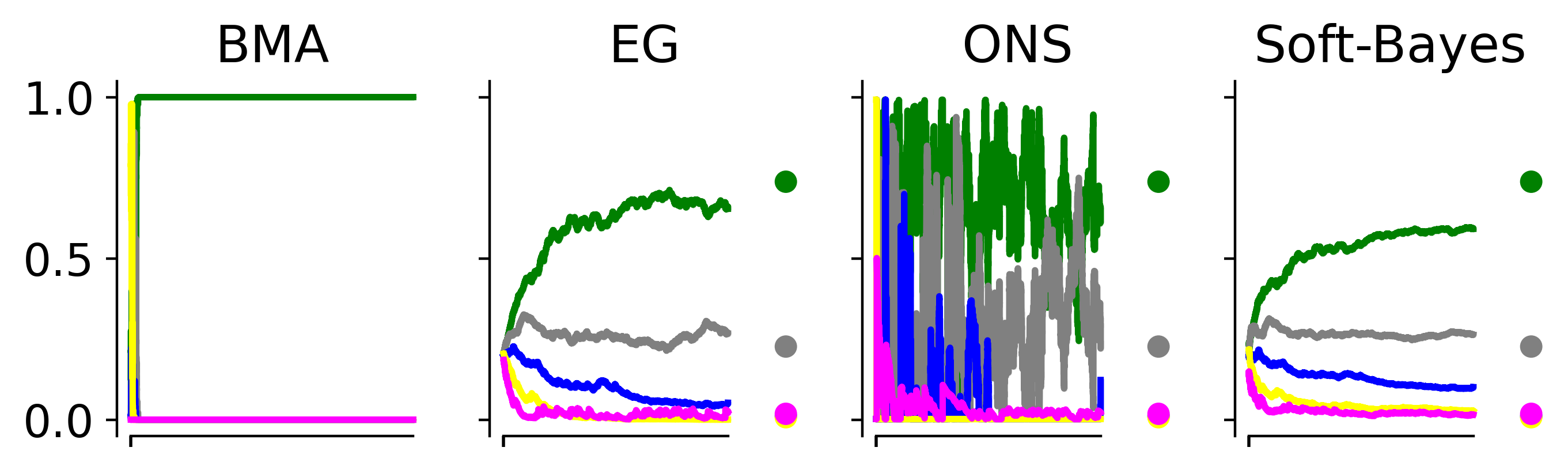}
         \caption{Elevators}
         \label{sf:Elevators_weights}
     \end{subfigure}\\
     \begin{subfigure}[b]{0.7\textwidth}
         \centering
         \includegraphics[width=\textwidth]{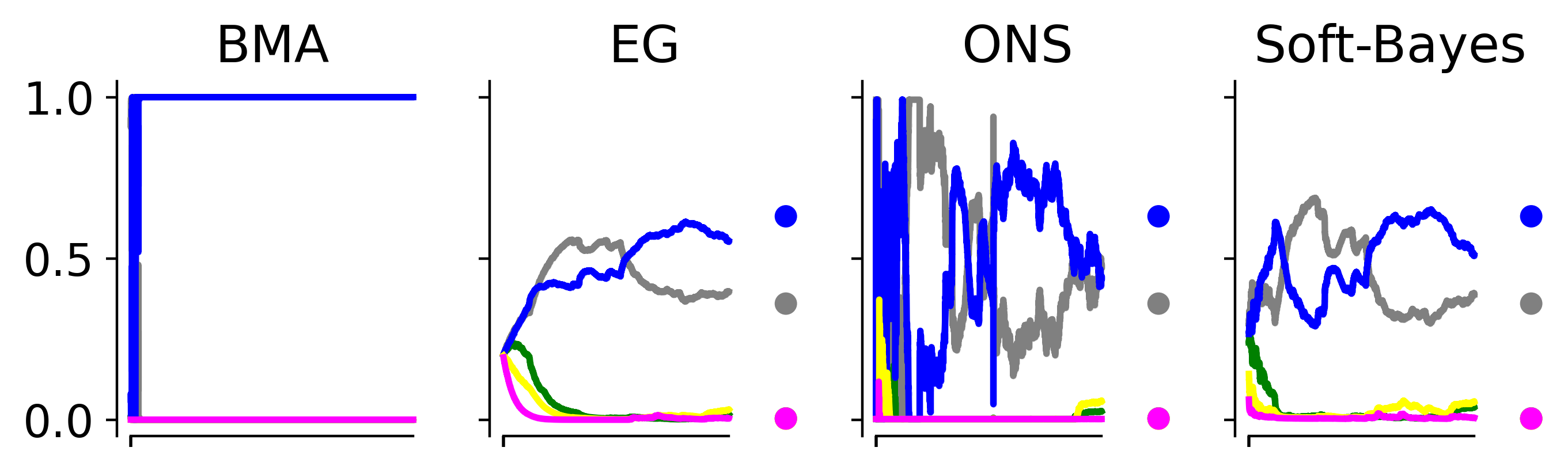}
         \caption{SARCOS}
         \label{sf:SARCOS_weights}
     \end{subfigure}\\
     \begin{subfigure}[b]{0.7\textwidth}
         \centering
         \includegraphics[width=\textwidth]{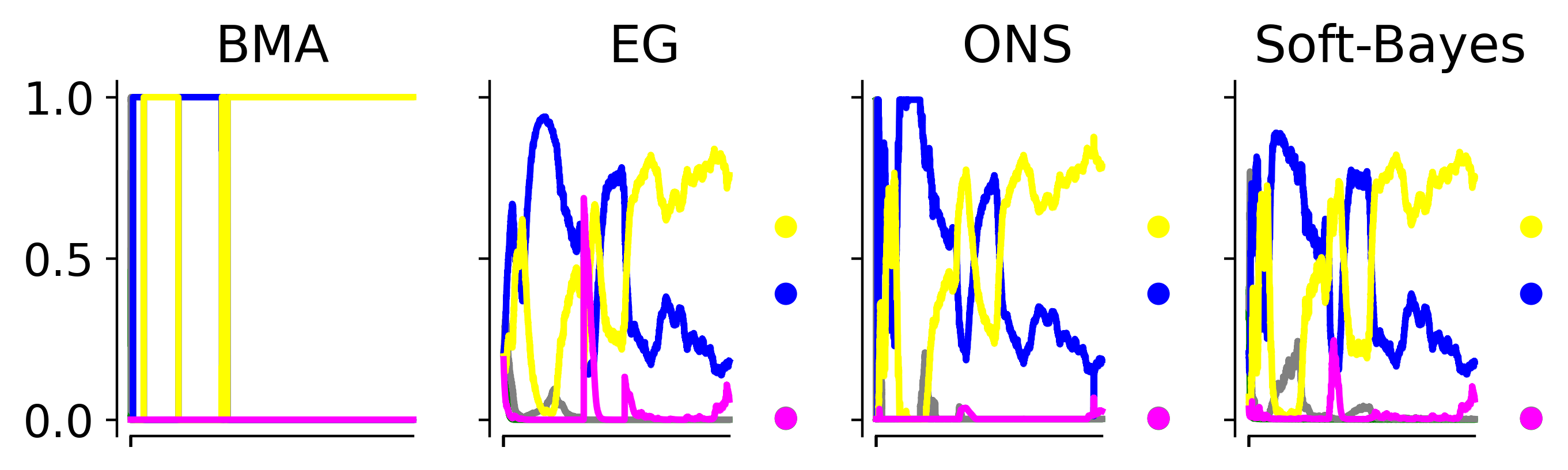}
         \caption{Kuka \#1}
         \label{sf:kuka_weights}
     \end{subfigure}
        \caption{The evolution of the weight vector $\V{w}_t$ as a function of $t$ in the non-stationary environment experiment in the Elevators, SARCOS, and Kuka \#1 datasets. Results are shown for the first trial. Dots on the right side of a plot denote the final weights of the BCRP.}
        \label{fig:weights_doebe_evolution}
\end{figure*}

\subsection{Cumulative Reward Plots}
We show the cumulative reward plots (i.e., the average PLL as a function of $t$) in \cref{fig:doebe_results}.

\section{Sensitivity to the Learning Rate} \label{app:learning_rate}

The learning rate marks an important hyperparameter in online convex optimization, with regret bounds often being quite sensitive to the learning rate. In this section, we investigate the practical effects of the learning rate parameter in EG and ONS.

For the experiment using the BONG (\cref{sec:exp_bnn} \& \cref{app:more_mnist}), we performed additional experiments using differing hyperparameters in OCO algorithms. Specifically, we vary $\eta$ in EG and $\beta$ in ONS for values $10^{-k}$ and $k\in \{0, 1, 2, 3, 4\}$. Results can be found in \cref{tab:bong_lr_ablation}. We find that the results are not particularly sensitive to the learning rate, with all results except for EG with a high learning rate ($10^{0}$) and ONS with a small learning rate ($10^{-4}$) outperforming O-BMA.

\begin{table}[ht]
    \centering
    \begin{tabular}{llc}
        \toprule
        \textbf{Method} & \textbf{Learning Rate} & \textbf{Average PLL} $\pm$ \textbf{Std. Dev.} \\
        \midrule
        EG ($\eta$) & $10^{0}$ & $-1.012 \pm 0.04$ \\
           & $10^{-1}$ & $-0.850 \pm 0.02$ \\
           & $10^{-2}$ & $-0.850 \pm 0.02$ \\
           & $10^{-3}$ & $-0.862 \pm 0.02$ \\
           & $10^{-4}$ & $-0.865 \pm 0.02$ \\
        \midrule
        ONS ($\beta) $& $10^{0}$ & $-0.878 \pm 0.03$ \\
           & $10^{-1}$ & $-0.859 \pm 0.03$ \\
           & $10^{-2}$ & $-0.860 \pm 0.02$ \\
           & $10^{-3}$ & $-0.900 \pm 0.03$ \\
           & $10^{-4}$ & $-0.948 \pm 0.03$ \\
        \midrule
        Soft-Bayes & -- & $-0.847 \pm 0.02$ \\
        \midrule
        O-BMA & -- & $-0.932 \pm 0.02$ \\
        BCRP & -- & $-0.843 \pm 0.02$ \\
        \bottomrule
    \end{tabular}
    \caption{Comparison of the average predictive log likelihood value (higher is better) for the BONG experiment (\cref{sec:exp_bnn}) while varying the learning rates of EG and ONS. Results are reported as the median $\pm$ 1 standard deviation across five trials.}
    \label{tab:bong_lr_ablation}
\end{table}

Results are similar in the forecasting example (\cref{sec:exp_forecasting} \& \cref{app:more_garch}), where the results in \cref{tab:garch_lr_ablation} suggest largely comparable performance across learning rate values. The performance of EG degrades at very large or very small learning rates, but ONS seems fairly stable in this setting.

\begin{table}[ht]
    \centering
    \begin{tabular}{llc}
        \toprule
        \textbf{Method} & \textbf{Learning Rate} & \textbf{Average PLL} $\pm$ \textbf{Std. Dev.} \\
        \midrule
        EG & $10^{0}$ & $2.776 \pm 0.23$ \\
           & $10^{-1}$ & $3.188 \pm 0.13$ \\
           & $10^{-2}$ & $3.249 \pm 0.13$ \\
           & $10^{-3}$ & $2.978 \pm 0.14$ \\
           & $10^{-4}$ & $2.858 \pm 0.17$ \\
        \midrule
        ONS & $10^{0}$ & $3.256 \pm 0.12$ \\
           & $10^{-1}$ & $3.330 \pm 0.13$ \\
           & $10^{-2}$ & $3.300 \pm 0.13$ \\
           & $10^{-3}$ & $3.229 \pm 0.10$ \\
           & $10^{-4}$ & $3.159 \pm 0.12$ \\
        \midrule
        Soft-Bayes & -- & $3.296 \pm 0.13$ \\
        \midrule
        O-BMA & -- & $3.040 \pm 0.24$ \\
        BCRP & -- & $3.296 \pm 0.12$ \\
        \bottomrule
    \end{tabular}
    \caption{Comparison of the average predictive log likelihood value (higher is better) for the GARCH experiment (\cref{sec:exp_forecasting}) while varying the learning rates of EG and ONS. Results are reported as the median $\pm$ 1 standard deviation across ten trials.}
    \label{tab:garch_lr_ablation}
\end{table}

\section{Dynamic Model Averaging With Differing Forgetting Factors}

In our experiments in \cref{sec:online_nonstationary}, we used a default ``forgetting factor'' for DMA, as recommended in \citet{raftery2010online}. This is similar to our other experiments, in which we leave OBS hyperparameters at reasonable defaults and do not otherwise tune them. Nevertheless, to further understand the impacts of this hyperparameter, we repeated our experiments with more aggressive and more conservative values of the forgetting factor. We report results in \cref{tab:dma_with_other_lr}. 

Overall, we find that DMA with extremely low forgetting factors performs well in highly nonstationary environments (Kuka \#1, CaData), but it is still comparable to D-ONS, which has more robust guarantees regarding predictive performance. Moreover, this is without tuning the hyperparameters of D-ONS. In other datasets tested, OBS methods are still superior to DMA (CPU Small, SARCOS, Elevators). We additionally remark that DMA has much similarity with a sliding window approach and loses many of the statistical properties of O-BMA; on the other hand, adaptive/dynamic OCO algorithms D-ONS still come with robust theoretical guarantees.

\begin{table}[h]
\centering
\begin{tabular}{lccccc}
\toprule
Method & Elevators & SARCOS & Kuka \#1 & CaData & CPU Small \\
\midrule
O-BMA      & -0.59 & 0.77 & 0.61 & -0.84 & 0.33 \\
DMA        & -0.59 & 0.79 & 0.69 & -0.81 & 0.33 \\
DMA-0.9    & -0.60 & 0.80 & \textbf{0.75} & -0.80 & 0.33 \\
DMA-0.95   & -0.59 & 0.80 & 0.73 & -0.80 & 0.33 \\
EG         & -0.58 & 0.78 & 0.56 & -0.81 & 0.35 \\
Soft-Bayes & -0.58 & 0.80 & 0.70 & -0.81 & 0.34 \\
BCRP       & \textbf{-0.57} & 0.80 & 0.66 & -0.81 & \textbf{0.35} \\
ONS        & -0.58 & 0.80 & 0.70 & -0.80 & 0.34 \\
D-ONS      & -0.58 & \textbf{0.81} & 0.73 & -0.80 & 0.35 \\
\bottomrule
\end{tabular}
\caption{Comparison of methods on non-stationary datasets (c.f. \cref{tab:nonstationary_results}), including DMA with additional forgetting factors.} \label{tab:dma_with_other_lr}
\end{table}

\end{document}